\def\to{{\,\rightarrow\,}}
\mathchardef\mhyphen="2D
\newcommand{\smallmat}[1]{\left[\begin{smallmatrix}#1\end{smallmatrix}\right]}
\newcommand{\tr}[1]{{\mathrm{tr}}\!\left( #1 \right)}
\newcommand{\vertiii}[1]{{\left\vert\kern-0.25ex\left\vert\kern-0.25ex\left\vert #1
    \right\vert\kern-0.25ex\right\vert\kern-0.25ex\right\vert}}
\newcommand{\vect}[1]{{\boldsymbol{#1}}}
\def\balpha{\vect{\alpha}}
\def\bm{{\mathbf{m}}}
\def\bp{{\mathbf{p}}}
\def\bq{{\mathbf{q}}}
\def\br{{\mathbf{r}}}
\def\bs{{\mathbf{s}}}
\def\bt{{\mathbf{t}}}
\def\bu{{\mathbf{u}}}
\def\bv{{\mathbf{v}}}
\def\bx{{\mathbf{x}}}
\def\by{{\mathbf{y}}}
\def\bz{{\mathbf{z}}}
\def\bA{{\mathbf{A}}}
\def\bB{{\mathbf{B}}}
\def\bG{{\mathbf{G}}}
\def\bI{{\mathbf{I}}}
\def\bR{{\mathbf{R}}}
\def\bT{{\mathbf{T}}}
\def\bX{{\mathbf{X}}}
\def\bY{{\mathbf{Y}}}
\def\bZ{{\mathbf{Z}}}
\def\bbR{{\mathbb{R}}}
\def\cA{\mathcal{A}}
\def\cH{\mathcal{H}}
\def\cI{\mathcal{I}}
\def\cJ{\mathcal{J}}
\def\cS{\mathcal{S}}
\def\sfI{\mathsf{I}}
\def\sfP{\mathsf{P}}
\newtheorem{theorem}{Theorem}
\newtheorem{definition}[theorem]{Definition}
\newtheorem{corollary}[theorem]{Corollary}
\newtheorem{lemma}[theorem]{Lemma}
\newtheorem{proposition}[theorem]{Proposition}
\renewcommand{\text}[1]{{\textnormal{#1}}}
\DeclareMathOperator*{\argmin}{arg\,min}
\DeclareMathOperator*{\argmax}{arg\,max}
\DeclareMathOperator{\conv}{conv}
\DeclareMathOperator{\lin}{lin}
\DeclareMathOperator{\nullSp}{null}
\newcommand{\lmo}{\textsc{LMO}\xspace}
\icmltitlerunning{Pursuits in Structured Non-Convex Matrix Factorizations}
\begin{document}

\twocolumn[

\icmltitle{Pursuits in Structured Non-Convex Matrix Factorizations}

\icmlauthor{Rajiv Khanna}{rajivak@utexas.edu}
\icmladdress{UT Austin}
\icmlauthor{Michael Tschannen}{michaelt@nari.ee.ethz.ch}
 \icmladdress{ETH Zurich}
\icmlauthor{Martin Jaggi}{jaggi@inf.ethz.ch}
 \icmladdress{ETH Zurich}
  
\vskip 0.3in
]

\begin{abstract}
Efficiently representing real world data in a succinct and parsimonious manner is of central importance in many fields.  We present a generalized greedy pursuit framework, allowing us to efficiently solve structured matrix factorization problems, where the factors are allowed to be from arbitrary sets of structured vectors. Such structure may include sparsity, non-negativeness, order, or a combination thereof. The algorithm approximates a given matrix by a linear combination of few rank-1 matrices, each factorized into an outer product of two vector atoms of the desired structure. For the non-convex subproblems of obtaining good rank-1 structured matrix atoms, we employ and analyze a general atomic power method.
In addition to the above applications, we prove linear convergence for generalized pursuit variants in Hilbert spaces --- for the task of approximation over the linear span of arbitrary dictionaries --- which generalizes OMP and is useful beyond matrix problems.
Our experiments on real datasets confirm both the efficiency and also the broad applicability of our framework in practice.
\end{abstract}

\setlength{\belowdisplayskip}{5pt} \setlength{\belowdisplayshortskip}{4pt}
\setlength{\abovedisplayskip}{5pt} \setlength{\abovedisplayshortskip}{4pt}

\section{Introduction}\vspace{-1mm}
Approximating a matrix using a structured low-rank matrix factorization is a cornerstone problem in a huge variety of data-driven applications. 
This problem can be seen as projecting a given matrix onto a linear combination of few rank-1 matrices, each of which being an outer product of two vectors, each from a structured set of vectors. Examples of such structure in the vectors can be sparsity, group sparsity, non-negativeness etc. The structure is generally encoded as a constraint on each of the two factors of the factorization problem. 
Even without imposing structure, the rank-constrained problem is already NP-hard to solve in general\footnote{
For example, least-squares matrix completion is NP hard even for a rank-1 factorization, as shown by \cite{Gillis:2011ia}.
}. Instead of a rank constraint, convex relaxations are therefore typically applied. However, this involves giving up explicit control over the resulting rank. 
Nevertheless,  there has been a strong body of research studying recovery and performance under several convex relaxations of rank-constrained problems \cite{Candes:2009kj,Candes:2010jb,Toh:2009we,Pong:2010cg}.

In this paper, we take a different approach. We keep explicit control over the rank of the factorization, as well as the precise structure of the used (vector) factors, which we call atoms. Our approach is a greedy method adding one rank-1 atom per outer iteration, as in matrix variants of matching pursuit \cite{wang2014matrixcompletion} as well as Frank-Wolfe algorithms on factorizations \cite{Hazan:2008kz,Jaggi:2010tz,Dudik:2012ts,jaggi13FW,Bach:2013wk}.
By keeping the explicit low-rank factorization into the vector atoms at all times, we can study general algorithms and correction variants applying directly to the original non-convex problems.

Iteratively adding a rank-1 atom for structured matrix factorization falls into the purview of pursuit algorithms, which we here present in general form. 
Each update is obtained from a linear minimization oracle (LMO), which outputs the best rank-1 atom with respect to a linearized version of the objective. 
Each iteration hence increases the rank by 1, while improving the approximation quality. We will systematically study this tradeoff between rank and approximation quality, by providing convergence rates of the iterates to the best possible structured approximation of the given matrix.
 
To study the convergence rates for matrix pursuit over structured sets, we adapt the convergence results for pursuit algorithms from the compressed sensing literature. While most of the existing work focuses on assessing the quality of $k$-greedy selection of atoms vis-a-vis a best possible $k$-atom selection, there is some work that discusses linear convergence of greedy pursuit approaches in an optimization sense~\citep{Davis:1997fh,Mallat:1993gu,Gribonval:2006ch,Blumensath:2008fs,jones87,dupe2015}.
However, they are not directly applicable to matrix pursuit for structured factorization, because they typically require the observed vector to lie within the span of the given dictionary (i.e., the structured sets) and make strong structural assumptions about the dictionary (e.g., incoherence, see Section \ref{sec:coherence}). We show that even in the more general case when the observation is not in the span of dictionary, greedy pursuit in a Hilbert space converges linearly to the best possible approximation.
In the language of functional analysis, the line of research of~\citet{DeVore:2014ti,Temlyakov:2014eb} is closest to this approach. 
Note that such results have a wider scope than just the applications to matrix pursuit.
 
The general nature of our convergence result allows its application to any set of atoms which induce an inner product -- which in turn induces a distance function, that can be minimized by  greedy pursuit. It is easy to show that the low rank structured matrix completion problem can be cast in this framework as well, and this yields an algorithm that works for any atomic vector set. 
For the specific case of atomic vector sets being unit 2-norm balls without any further structure, this setup was used by~\citet{wang2014matrixcompletion}, who showed linear convergence for matrix pursuit on 2-norm balls as vector atomic sets. This is a special case of our framework, because we show linear convergence with \emph{any} compact vector atomic sets.  We also present empirical results on real world datasets that show that this generalization is useful in practice. 

The linear convergence of the matching pursuit in Hilbert spaces specifies the decay in reconstruction error in terms of number of calls made to the LMO. For the matrix pursuit algorithm, the linear problem being solved by the LMO itself may be a NP-hard, though efficient solutions are available in some cases~\citep{Bach:2008wn,Recht:2010tf}. We also analyze the pursuit using only an approximate version of the LMO, 
and we show that the linear convergence rate is still maintained, but the decay is less sharp depending on the approximation quality of the LMO. 

\textbf{Related work:} There exists a vast literature on structured matrix factorizations. For our cases, the most relevant are the lines of research with iterative rank-1 greedy approximations such as the Frank-Wolfe algorithm~\cite{Hazan:2008kz,Jaggi:2010tz,Dudik:2012ts,jaggi13FW,Bach:2013wk}. In the tensor case, a very similar approach has recently been investigated by \citet{Yang:2015wy}, but not for the case of structured factorizations like we do here. Their linear convergence result is also a special case of our more general convergence rate result in Hilbert spaces.  Similarly, for specific atomic sets, there is a large body of literature, see, e.g., \citep{yuan2013,journee2010,papailiopoulos_sparsepca} for Sparse PCA, \citep{sigg2008,asteris2014} for sparse non-negative PCA, and references therein.

There is a significant amount of research on pursuit algorithms, even more so on one of its more commonly used flavors known as orthogonal matching pursuit. \citet{Davis:1997fh} prove geometric convergence of matching pursuit and its orthogonal counterpart for finite dictionaries, while \citet{Mallat:1993gu,Gribonval:2006ch} give convergence results for (quasi-)incoherent dictionaries in Hilbert spaces of finite or infinite dimension. However, all of these assume the observed vector to lie in the dictionary span, so that the goal is to exactly re-construct it using as few atoms as possible rather than to approximate it using as few atoms as possible. 
For infinite-dimensional pursuit,~\citet{jones87} showed convergence without providing rates. 

The matrix completion problem has gained significant interest recently, motivated by powerful applications in recommender systems (e.g. Netflix prize, \citet{Koren:2009jg}), signal processing (robust PCA, \citet{Candes:2011bd}), and most recently word-embeddings in NLP \cite{Levy:2014vd}.  
\citet{Candes:2009kj,recht2009} and several subsequent works study the completion problem by convex optimization and provide bounds on exact matrix completion for random matrices.~\citet{jain2013} provide guarantees for low rank matrix completion by alternating minimization under incoherence. These works and several followups cast the matrix completion as minimization of the rank of the matrix (or a convex surrogate) under the constraint that the observed entries are reproduced exactly or approximately. A matrix pursuit view of the problem was taken by~\citet{wang2014matrixcompletion} by adding rank-1 updates iteratively to decrease the reproduction error on the observed entries.

\paragraph{Contributions.} Our key contributions are as follows: \vspace{-1mm}
\begin{itemize}

 \item We devise and analyze a general algorithmic framework for structured matrix factorizations, where the factors are allowed to be from an arbitrary set of structured vectors. Our method only assumes a constrained linear minimization oracle (LMO), and can be seen as a special case of a more general class of pursuit algorithms, which we analyze in Section~\ref{sec:matchingpursuit}.

 \item We prove a linear convergence guarantee for generalized pursuit in Hilbert spaces for approximation over the linear span of arbitrary dictionaries, which generalizes OMP and is useful beyond matrix problems. 
 \item For the non-convex rank-one factorization subproblems per iteration, we propose a general atomic power method, allowing to efficiently approximate the LMO for arbitrary structured sets of vector atoms.
\item We improve efficiency of the resulting methods in terms of the required rank (number of atoms) by allowing corrective variants of the algorithm.
\item Finally, we provide strong experimental results on real world datasets, confirming the efficiency and broad applicability of our framework in practice.
\end{itemize}

\paragraph{Notation.} We represent vectors as small letter bolds, e.g., ~$\bu$.
Matrices are represented by capital bolds, e.g., $\bX, \bT$.
Vector/matrix transposes are represented by superscript $\top$. Identity matrix is represented as $\bI$. Sets are represented by calligraphic letters, e.g., $\cS$. 
For a matrix $\bA \in \bbR^{m \times n}$ and a set of index pairs~$\Omega$,~$\bA_\Omega$ stands for the matrix that is equal to $\bA$ at the entries indexed by $\Omega$, and $0$ elsewhere.
Let $[d]$ be the set $\{1,2,\ldots,d\}$. 
Let $\conv(\cS)$ be the convex hull of the set~$\cS$, and let $\lin(\cS)$ denote the linear span of the elements in~$\cS$. $\bu \otimes \bv$ represents the rank-1 matrix given by the outer product $\bu\bv^\top$ of two vectors $\bu,\bv$. Analogously we write $\cA_1 \otimes \cA_2$ for the set of outer products between any pair of elements from two sets $\cA_1$ and $\cA_2$ respectively.

\section{Generalized Pursuit in Hilbert Spaces}
\label{sec:matchingpursuit}
In this section, we develop a generalized pursuit algorithm for Hilbert spaces. Let $\cH$ be a Hilbert space with associated inner product $\langle \bx, \by\rangle_\cH, \,\forall \, \bx,\by \in \cH$. The inner product induces the norm $\| \bx \|_\cH^2 := \langle \bx, \bx \rangle_\cH,$ $\forall \, \bx \in \cH$, as well as the distance function $d_{\cH} (\bx, \by) := \| \bx - \by\|_\cH,$ $\forall \, \bx,\by \in \cH$. 

Let $\cS \subset \cH$ be a bounded set and let $f_\cH \colon \cH \to \bbR$. We would like to solve the optimization problem

\begin{equation}
  \min_{\bx \in \lin(\cS)} f_{\cH}(\bx).
 \label{eqn:pursuitgoal}
\end{equation}
We write $\bx^\star$ for a minimizer of \eqref{eqn:pursuitgoal}.
For any $\by \in \cH$, and bounded set $\cS \subset \cH$, a linear minimization oracle (\lmo) is defined as
\begin{equation*}
\lmo_\cS(\by) := \argmin_{\bx \in \cS} \, \langle \bx, \by\rangle_\cH .
\end{equation*}

\begin{algorithm}[h]
\caption{ Generalized Pursuit (GP) in Hilbert Spaces}
\begin{algorithmic}[1]
\REQUIRE $\bx_0 \in \cS$, $R$, $f_\cH$ 
\FOR{$r$ = 1..$R$}
\STATE $\bz_r := (Approx-)\lmo_{\cS}\big( \nabla f_\cH(\bx_{r-1}) \big)$ 
\STATE  $\balpha := \min_{\balpha \in \bbR^r} f_\cH(\sum_{i \leq r} \alpha_i \bz_i)$
\STATE \emph{Optional:} Correction of some/all $\bz_i$.
\STATE Update iterate $\bx_{r} := \sum_{i \leq r}\alpha_i \bz_i$
\ENDFOR
\end{algorithmic}
\label{algo:GP}
\end{algorithm}

Our generalized pursuit algorithm is presented as Algorithm~\ref{algo:GP}. 
In each iteration $r$, the linearized objective at the previous iterate 
is minimized over the set $\cS$ (which is purpose of the \lmo), in order to obtain the next atom $\bz_r$ to be added.

After that, we do allow for corrections of the weights of all previous atoms, as shown in line~3. In other words, we obtain the next iterate by minimizing $f_\cH$ over the linear span of the current set of atoms. This idea is similar to the fully-corrective Frank-Wolfe algorithm and its away step variants~\cite{LacosteJulien:2015wj}, as well as orthogonal matching pursuit (OMP) \cite{chen1989orthogonal}, as we will discuss below.
E.g. for distance objective functions $d_{\cH}^2$ as of our interest here, this correction step is particularly easy and efficient, and boils down to simply solving a linear system of size $r\times r$.

A second type of (optional) additional correction is shown in line~4, and allows to change some of the actual atoms of the expansion, see e.g.  \citet{Laue:2012wn}. 
Both types of corrections have the same motivation, namely to obtain better objective cost while using the same (small) number of atoms~$r$.

The total number of iterations $R$ of Algorithm~\ref{algo:GP} controls the the tradeoff between approximation quality, i.e., how close $f_\cH(\bx_R)$ is to the optimum $f_\cH(\bx^\star)$, and the ``structuredness'' of $\bx_R$ due to the fact that we only use $R$ atoms from $\cS$ and through the structure of the atoms themselves (e.g., sparsity).
If $\cH$ is an infinite dimensional Hilbert space, then $f$ is assumed to be \emph{Fr{\'e}chet differentiable}.

Note that the main difference between generalized pursuit as in Algorithm~\ref{algo:GP} and Frank-Wolfe type algorithms (both relying on the same \lmo) is that pursuit maintains its respective current iterates as a \emph{linear} combination of the selected atoms, while in FW restricts to \emph{convex} combinations of the atoms.

We next particularize Algorithm \ref{algo:GP} to the least squares objective function $f_\cH(\bx) := \frac{1}{2}d^2_\cH (\bx, \by)$ for fixed $\by \in \cH$. As will be seen later, this variant of Algorithm \ref{algo:GP} has many practical applications. The minimization in line 3 of Algorithm \ref{algo:GP} hence amounts to orthogonal projection of $\by$ to the linear span of $\{\bz_i; i \leq r\}$, and $-\nabla f_\cH(\bx_{r})$ writes as $\br_{r+1} := \by - \bx_r$, henceforth referred to as the residual at iteration $r$. 
We thus recover a variant of the OMP algorithm \cite{chen1989orthogonal}. By minimizing only w.r.t. the new weight $\alpha_r$ in line 3 of Algorithm \ref{algo:GP} we further recover a variant of the matching pursuit (MP) algorithm \cite{Mallat:1993gu} \footnote{In OMP and MP, the \lmo consist in solving $\argmax_{\bx \in \cS} \, |\langle \bx, \br_{r-1}\rangle_\cH|$ at iteration $r$.}.

Our first main result characterizes the convergence of Algorithm \ref{algo:GP} for $f_\cH(\bx) := \frac{1}{2}d^2_\cH (\bx, \by)$.

\begin{theorem}
\label{thm:convExactHilbert}
Let $f_\cH(\bx) := \frac{1}{2}d^2_\cH (\bx, \by)$ for $\by \in \cH$. For every $\bx_0 \in \cS$, Algorithm~\ref{algo:GP} converges linearly to $\bx^\star$.
\end{theorem}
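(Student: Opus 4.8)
The plan is to show that the squared residual norm $\|\br_{r}\|_\cH^2 = 2 f_\cH(\bx_r)$ decreases by a constant multiplicative factor at each iteration, i.e. that there exists $\gamma \in [0,1)$ (depending only on the geometry of $\cS$ relative to $\by$) such that $\|\br_{r+1}\|_\cH^2 \le \gamma\, \|\br_r\|_\cH^2$ for all $r$, until the residual reaches its limiting value; this immediately yields linear convergence of $f_\cH(\bx_r)$ to $f_\cH(\bx^\star)$. First I would reduce to the case where $\by$ lies in the closure of $\lin(\cS)$: decompose $\by = \by_{\parallel} + \by_{\perp}$ where $\by_{\parallel}$ is the orthogonal projection of $\by$ onto $\overline{\lin(\cS)}$ and $\by_\perp \perp \lin(\cS)$. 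Since every iterate $\bx_r$ and every atom $\bz_i$ lies in $\lin(\cS)$, the component $\by_\perp$ is ``inert'': $\|\br_r\|_\cH^2 = \|\by_\parallel - \bx_r\|_\cH^2 + \|\by_\perp\|_\cH^2$, and $f_\cH(\bx^\star) = \tfrac12\|\by_\perp\|_\cH^2$. So it suffices to control the in-span part $\tilde\br_r := \by_\parallel - \bx_r$, and the gradient $-\nabla f_\cH(\bx_{r-1}) = \br_r$ has the same inner product with any atom as $\tilde\br_r$ does, since $\by_\perp$ is orthogonal to $\cS$.

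Next I would quantify the one-step progress. Because line~3 performs a full orthogonal projection onto $\lin\{\bz_1,\dots,\bz_r\}$ (for the least-squares objective), we have the Pythagorean bound $\|\br_{r+1}\|_\cH^2 \le \|\br_r - t\,\bz_r\|_\cH^2$ for the optimal scalar $t$, hence $\|\br_{r+1}\|_\cH^2 \le \|\br_r\|_\cH^2 - \frac{\langle \br_r, \bz_r\rangle_\cH^2}{\|\bz_r\|_\cH^2}$ (assuming WLOG the atoms are normalized, or carrying the $\|\bz_r\|_\cH^2$ factor, which is bounded since $\cS$ is bounded). Now $\bz_r$ is the \lmo output for $\nabla f_\cH(\bx_{r-1}) = -\br_r$, so $|\langle \br_r, \bz_r\rangle_\cH| = \max_{\bx\in\cS}|\langle \br_r, \bx\rangle_\cH|$. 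The crux is therefore to lower-bound this quantity by $\rho\,\|\tilde\br_r\|_\cH$ for a positive constant $\rho$. Since $\tilde\br_r \in \overline{\lin(\cS)}$, the sup of $|\langle \tilde\br_r, \bx\rangle_\cH|$ over $\bx\in\cS$ cannot vanish unless $\tilde\br_r = 0$; compactness of $\cS$ (or of the closed convex hull $\conv(\cS\cup -\cS)$) then gives a uniform bound $\sup_{\bx\in\cS}|\langle \bu,\bx\rangle_\cH| \ge \rho\|\bu\|_\cH$ for all $\bu \in \overline{\lin(\cS)}$, where $\rho = \min_{\|\bu\|_\cH = 1,\, \bu\in\overline{\lin(\cS)}}\sup_{\bx\in\cS}|\langle\bu,\bx\rangle_\cH| > 0$. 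Combining, $\|\br_{r+1}\|_\cH^2 - \|\by_\perp\|_\cH^2 \le (1 - \rho^2/C)\big(\|\br_r\|_\cH^2 - \|\by_\perp\|_\cH^2\big)$ with $C = \sup_{\bx\in\cS}\|\bx\|_\cH^2$, which is the desired linear rate; the optional corrections in lines~3--4 only reduce the objective further, so they preserve the bound.

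The main obstacle is establishing that the geometric constant $\rho$ is strictly positive, i.e. that the quantity $\min_{\bu\in\overline{\lin(\cS)},\,\|\bu\|_\cH=1}\sup_{\bx\in\cS}|\langle\bu,\bx\rangle_\cH|$ is bounded away from zero. In finite dimensions this follows from compactness of the unit sphere in $\lin(\cS)$ together with the fact that the sup is a continuous, strictly positive function of $\bu$ (strict positivity because if the sup were $0$ then $\bu \perp \cS$, contradicting $\bu\in\lin(\cS)$, $\bu\neq 0$). In infinite dimensions one must be more careful — the unit sphere is not compact — so I would instead invoke boundedness of $\cS$ to pass to its weak closure or use the assumed structure (the paper works with bounded $\cS$ and Fréchet-differentiable $f$), possibly restricting attention to the finite-dimensional subspace actually spanned by the iterates $\{\bz_i\}$, which suffices since all relevant vectors live there. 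A secondary technical point is handling the degenerate case $\langle\br_r,\bz_r\rangle_\cH = 0$, which by the above forces $\tilde\br_r = 0$, i.e. $\bx_r$ is already optimal and the algorithm has converged (in fact terminated) exactly; and one should note $\rho \le 1$ so the rate is genuinely a contraction strictly less than $1$ whenever $\by\notin\overline{\lin(\cS)}$ is allowed.
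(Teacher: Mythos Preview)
Your approach is essentially the same as the paper's. Your decomposition $\by = \by_\parallel + \by_\perp$ is exactly the paper's framing: since $\bx^\star$ is the orthogonal projection of $\by$ onto $\overline{\lin(\cS)}$, one has $\br^\star = \by_\perp$ and hence $\tilde\br_r = \br_r - \br^\star$, which is precisely the quantity the paper tracks. Your one-step Pythagorean bound $\|\br_{r+1}\|_\cH^2 \le \|\br_r\|_\cH^2 - \langle\br_r,\bz_r\rangle_\cH^2/\|\bz_r\|_\cH^2$ is the paper's Proposition~\ref{thm:lowerBoundLHilbert}; the paper derives it via an explicit Gram--Schmidt computation, whereas you get it more directly from optimality of the projection, but the inequality is identical.

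The one substantive difference is that you explicitly isolate the geometric constant $\rho = \min_{\|\bu\|_\cH=1,\,\bu\in\overline{\lin(\cS)}}\sup_{\bx\in\cS}|\langle\bu,\bx\rangle_\cH|$ and identify proving $\rho>0$ as the crux. The paper's proof does \emph{not} do this: it simply writes the per-iteration factor as $1-\mu_i$ with $\mu_i = \langle\bz_i,\br_i-\br^\star\rangle_\cH^2/(\|\bz_i\|_\cH^2\|\br_i-\br^\star\|_\cH^2)$ and asserts $\mu_i\in[0,1)$ by Cauchy--Schwarz, without establishing a uniform lower bound on $\mu_i$. So on this point you are being more careful than the paper itself, and your caveat about the infinite-dimensional case (where the unit sphere is non-compact and $\rho$ can indeed be zero for pathological bounded $\cS$) is well taken --- the paper does not address it either.
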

\vspace{-2mm}

\begin{table*}[t]
\centering
\caption{Some example applications for our matrix pursuit framework (Algorithm~\ref{algo:GMP}). The table characterizes the applications by the set of atoms used as to enforce matrix structure \emph{(rows)}, and two prominent optimization objectives \emph{(columns)}, being low-rank matrix approximation and low-rank matrix completion (MC). All cases apply for both symmetric as well as general rectangular matrices.
\vspace{1mm}
}
\label{tab:Applications}

\subcaption{Symmetric Structured Matrix Factorizations, $\bu_i \in \cA_1\ \ \forall i$}
\label{tab:subtabSymm}
\begin{tabular}{|l|c|c|}
\hline
\textbf{Atoms} $\cA_1$         & $f\!= \| \bY - \sum_i \alpha_i  \bu_i \otimes \bu_i\|_F^2$ & $f\!= \| \bY - \sum_i \alpha_i  \bu_i \otimes \bu_i \|_\Omega^2$   \\\hline\hline
$\{ \bu: \|\bu\|_2\!=\!1 \}$   & PCA   & MC     \\\hline
$\{ \bu: \|\bu\|_2\!=\!1$, $\|\bu\|_0=k \}$   & sparse PCA & structured MC \\\hline

\end{tabular}

\bigskip
\subcaption{Non-Symmetric Structured Matrix Factorizations,  $\bu_i \in \cA_1,\ \bv_i \in \cA_2\ \ \forall i$}
\label{tab:subtabnonsymm}
\setlength\tabcolsep{2pt}
\begin{tabular}{|l|l|c|c|}
\hline
\textbf{Atoms} $\cA_1$          & \textbf{Atoms} $\cA_2$         & $f\!= \| \bY \!-\! \sum_i \alpha_i  \bu_i \otimes \bv_i\|_F^2$ & $f\!= \| \bY \!-\! \sum_i \alpha_i  \bu_i \otimes \bv_i\|_\Omega^2$    \\\hline\hline
$\{ \bu: \|\bu\|_2\!=\!1 \} $  & $\{ \bv: \|\bv\|_2\!=\!1 \}$ & SVD & MC     \\\hline
$\{ \bu: \|\bu\|_2\!=\!1$, $\|\bu\|_0=k\}$ & $\{ \bv: \|\bv\|_2\!=\!1$, $\|\bv\|_0=q\}$& sparse SVD & structured MC \\\hline
$\{ \bu: \|\bu\|_2\!=\!1, \bu \geq \mathbf{0} \}$ & $\{ \bv: \|\bv\|_2\!=\!1, \bv \geq \mathbf{0} \}$& NMF & structured MC \\\hline
$\{ \bu: \|\bu\|_2\!=\!1, \bu \geq \mathbf{0}, \|\bu\|_0\!=\!k \}$ & $\{ \bv: \|\bv\|_2\!=\!1, \bv \geq \mathbf{0}, \|\bv\|_0\!=\!q \}$& sparse NMF & structured MC \\\hline
\end{tabular}

\end{table*}

\paragraph{Discussion.}
The linear convergence guarantees $f_\cH(\bx_r)$ to be within $\epsilon$ of the optimum value $f(\bx^\star)$ after $O(\log1/\epsilon)$ iterations.  Moreover, since the atoms selected across iterations are linearly independent, the maximum number of iterations in the finite-dimensional case is equal to the dimension of $\cH$. 
\\
Algorithm~\ref{algo:GP} allows for an \emph{inexact \lmo}. In particular for our main interest of matrix factorization problems here, an exact \lmo is often very costly, while approximate versions can be much more efficient. We refer to the supplementary material for the the proof that the linear convergence rate still holds for approximate \lmo, in terms of multiplicative error.

\subsection{Relationship with coherence-based rates} \label{sec:coherence}
Our presented convergence analysis (which holds for infinite sets of atoms) can be seen as a generalization of existing coherence-based rates which were obtained for finite dictionaries $\cS = \{ \bs_i; i \in [n]\} \subset \cH$. Throughout this section, we assume that the atoms are normalized according to $\| \bs \|_\cH = 1, \forall \bs \in \cS$, and that $\cS$ is symmetric, i.e., $\bs \in \cS$ implies $-\bs \in \cS$. 

In the general case, we assume $\by$ to lie in $\cH$ (not necessarily in $\cS$). In the sequel, we will restrict $\by$ to lie in $\lin(\cS)$ in order to recover a known coherence-based convergence result for OMP for finite dictionaries. Our results are based on the cumulative coherence function.

\begin{definition}[Cumulative coherence function~\cite{Tropp:2004gc}] Let $\cI \subset [n]$ be an index set. For an integer $m$, cumulative coherence function is defined as:
\begin{equation*}
\mu(m) := \max_{|\cI| = m} \max_{k \in [n]\backslash \cI} \sum_{i \in \cI} |  \langle \bs_k, \bs_i \rangle_\cH | \vspace{-2mm}
\end{equation*}
\end{definition}

Note that $\mu(m) \leq m \mu(1)$, and $\mu(1)$ can be written as $\max_{j \neq k} |\langle \bs_j, \bs_k \rangle_\cH |$. 
Using the cumulative coherence function to restrict the structure of $\cS$, we obtain the following result.

\begin{theorem} [Coherence-based convergence rate]
\label{thm:coherence}
Let $\br^\star := \by - \bx^\star$. 

If $\mu(n-1) < 1$, where $n= |\cS|$, then the residuals follow linear convergence as
 \begin{equation*}
 \| \br_{r+1} - \br^\star \|_\cH^2 \leq  \left( 1 - \frac{1 - \mu(n-1)}{n}\right) \| \br_{r} - \br^\star \|_\cH^2 \,.
 \vspace{-2mm}
 \end{equation*}
\end{theorem}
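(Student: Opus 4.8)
The plan is to analyze one step of the OMP variant of Algorithm~\ref{algo:GP} applied to $f_\cH(\bx)=\tfrac12 d_\cH^2(\bx,\by)$, and show that the squared distance of the residual to the optimal residual $\br^\star = \by-\bx^\star$ contracts by the claimed factor. Since $\by \in \lin(\cS)$ here, in fact $\br^\star = 0$ and $\bx^\star$ is the orthogonal projection of $\by$ onto $\lin(\cS)$ (which equals $\by$ if $\cS$ spans $\cH$, or more generally the best approximation); either way the statement reduces to showing $\|\br_{r+1}\|_\cH^2 \le (1-\tfrac{1-\mu(n-1)}{n})\|\br_r\|_\cH^2$. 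The residual $\br_r = \by - \bx_{r-1}$ lies in $\lin(\cS)$ because $\by$ does and $\bx_{r-1}\in\lin(\cS)$; this is the key structural fact that lets coherence enter.

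The main steps I would carry out are: (i) expand $\br_r$ in the dictionary, $\br_r = \sum_{i} c_i \bs_i$ supported on some index set $\cI$ with $|\cI|\le n$ (using linear independence of the selected atoms one can in fact take $|\cI| \le$ something, but $n$ suffices); (ii) lower-bound the correlation achieved by the greedy atom: since the \lmo at step $r$ picks $\bs_k$ maximizing $|\langle \bs_k, \br_r\rangle_\cH|$, and $\|\br_r\|_\cH^2 = \langle \br_r, \sum_i c_i \bs_i\rangle_\cH \le \sum_i |c_i|\,|\langle \bs_i,\br_r\rangle_\cH| \le \big(\max_j |\langle \bs_j,\br_r\rangle_\cH|\big)\sum_i |c_i| = \|\br_r\|_{\text{sel}} \cdot \|\bc\|_1$, I get a bound of the form $\max_j |\langle \bs_j, \br_r\rangle_\cH| \ge \|\br_r\|_\cH^2 / \|\bc\|_1$; (iii) control $\|\bc\|_1$ in terms of $\|\br_r\|_\cH$ using the cumulative coherence: the near-orthogonality bound $\|\sum_i c_i \bs_i\|_\cH^2 \ge \|\bc\|_2^2(1-\mu(m-1))$ (a standard consequence of Gershgorin applied to the Gram matrix, with $m=|\cI|$), combined with $\|\bc\|_1 \le \sqrt{m}\,\|\bc\|_2$, yields $\|\bc\|_1 \le \sqrt{m}\,\|\br_r\|_\cH / \sqrt{1-\mu(m-1)}$; (iv) conclude $\max_j |\langle \bs_j,\br_r\rangle_\cH|^2 \ge \frac{1-\mu(m-1)}{m}\|\br_r\|_\cH^2 \ge \frac{1-\mu(n-1)}{n}\|\br_r\|_\cH^2$, using monotonicity of $\mu$ and $m\le n$; (v) translate the achieved correlation into decrease of the objective: adding atom $\bz_r=\bs_k$ and re-optimizing (line~3) decreases $\tfrac12\|\cdot\|_\cH^2$ by at least $\tfrac12 \langle \bs_k, \br_r\rangle_\cH^2$ (this is the matching-pursuit one-step drop; the fully corrective OMP step only does better), so $\|\br_{r+1}\|_\cH^2 \le \|\br_r\|_\cH^2 - \langle\bs_k,\br_r\rangle_\cH^2 \le (1-\tfrac{1-\mu(n-1)}{n})\|\br_r\|_\cH^2$.

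The main obstacle is step~(iii): getting the correct $\ell_1$-control on the coefficient vector $\bc$ requires invoking the right quantitative form of near-orthogonality under cumulative coherence — specifically that the Gram matrix restricted to $\cI$ has smallest eigenvalue at least $1-\mu(|\cI|-1)$ — and then matching this carefully against the normalization so that the constants line up to exactly $\frac{1-\mu(n-1)}{n}$ rather than a weaker bound. A secondary subtlety is justifying that $\br_r \in \lin(\cS)$ throughout and that we may take the support size $m \le n$, and then using monotonicity $\mu(m-1)\le\mu(n-1)$ together with $1/m \ge 1/n$ in the right direction (the map $t\mapsto (1-\mu(t-1))/t$ need not be monotone, but the crude bounds $m\le n$ and $\mu(m-1)\le \mu(n-1)$ suffice since $1-\mu(n-1) > 0$). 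I would also note in passing that $\br^\star = \by - \bx^\star$ is orthogonal to every $\bz_i$ by optimality of the projection, which is what lets one replace $\br_r$ by $\br_r - \br^\star$ in the general (non-spanning) case and run the identical argument — this is how the theorem is stated, and the substitution is legitimate because $\br_r - \br^\star$ still lies in $\lin(\cS)$.
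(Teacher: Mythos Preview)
Your proposal is essentially the paper's own argument: lower-bound the maximal correlation $\max_j|\langle \bs_j,\br_r-\br^\star\rangle_\cH|$ via the DeVore-type inequality $\max_j|\langle \bp,\bs_j\rangle_\cH|\ge \|\bp\|_\cH^2/\|\bc\|_1$, control $\|\bc\|_1$ through $\|\bc\|_1\le\sqrt{n}\,\|\bc\|_2$ and the Gram-matrix eigenvalue bound $\lambda_{\min}(\bG)\ge 1-\mu(n-1)$, and then feed this into the one-step drop $\|\br_{r+1}-\br^\star\|_\cH^2=\|\br_r-\br^\star\|_\cH^2-\|\bq_r\|_\cH^2\le \|\br_r-\br^\star\|_\cH^2-\langle \bz_r,\br_r\rangle_\cH^2$ coming from the proof of Theorem~\ref{thm:convExactHilbert}. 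The paper cites these two ingredients as separate lemmas (Tropp for the eigenvalue bound, DeVore--Temlyakov for the correlation bound) and works directly with the full index set $[n]$, whereas you re-derive them inline and carry an intermediate support size $m\le n$; both routes yield the same constant.

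One correction: your opening sentence ``Since $\by\in\lin(\cS)$ here, in fact $\br^\star=0$'' misreads the statement. Theorem~\ref{thm:coherence} is precisely the general case where $\by$ need \emph{not} lie in $\lin(\cS)$; the special case $\by\in\lin(\cS)$ is Corollary~\ref{cor:coherenceomp}. You do recover the right picture in your final paragraph, but the argument should be organized around $\bm_r:=\br_r-\br^\star\in\lin(\cS)$ from the start. Also, to make the substitution work you need $\br^\star$ orthogonal to \emph{every} atom $\bs_j\in\cS$ (first-order optimality of $\bx^\star$ over $\lin(\cS)$), not merely to the selected $\bz_i$; this is what gives both $\langle \bz_r,\br_r\rangle_\cH=\langle \bz_r,\bm_r\rangle_\cH$ and $\langle\br^\star,\bq_r\rangle_\cH=0$ in the drop identity.
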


Writing out the condition $\mu(n-1) < 1$ yields $\max_{k \in [n]} \sum_{i \in [n] \setminus k} |  \langle \bs_k, \bs_i \rangle_\cH | < 1$, which implies that the atoms in $\cS$ need to be close to orthogonal when the number of atoms is on the order of the ambient space dimension. Thus, Theorem \ref{thm:coherence} gives us an explicit convergence rate at the cost of imposing strong structural conditions on $\cS$.

Considering the special case of our Theorem~\ref{thm:convExactHilbert} for achievable $\by \in \lin(\cS)$, finite dictionary, and $f_\cH(\bx) := \frac{1}{2}d^2_\cH (\bx, \by)$, we recover the following known result for the \emph{linear} convergence of OMP under an analogous coherence bound on~$\cS$:

\begin{corollary}[{see also \citet[Thm. 2b]{Gribonval:2006ch}}]
\label{cor:coherenceomp}

If $\by \in \lin(\cS)$, then $ \forall 1 \leq r \leq m \text{ s.t. }\mu(m-1) < 1, m \leq n$, 
\begin{equation*}
 \| \br_{r+1}  \|_\cH^2 \leq  \left( 1 - \frac{1 - \mu(m-1)}{m}\right) \| \br_{r}  \|_\cH^2.
\end{equation*}
\end{corollary}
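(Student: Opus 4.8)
The plan is to show that Corollary~\ref{cor:coherenceomp} is a direct specialization of Theorem~\ref{thm:convExactHilbert} (and the intermediate Theorem~\ref{thm:coherence}) to the case of an \emph{achievable} target. First I would observe that when $\by \in \lin(\cS)$ and $\cS$ is finite, the minimization problem \eqref{eqn:pursuitgoal} with $f_\cH(\bx) = \frac{1}{2}d^2_\cH(\bx,\by)$ attains the value $0$: there is a representation $\by = \sum_i c_i \bs_i$, so $\bx^\star = \by$ and hence $\br^\star = \by - \bx^\star = \mathbf{0}$. Substituting $\br^\star = \mathbf{0}$ into the conclusion of Theorem~\ref{thm:coherence} immediately collapses $\|\br_{r+1} - \br^\star\|_\cH^2$ and $\|\br_r - \br^\star\|_\cH^2$ to the plain residual norms $\|\br_{r+1}\|_\cH^2$ and $\|\br_r\|_\cH^2$, which is the form claimed in the corollary.

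Second, I would address why the coherence bound can be taken at level $m-1$ rather than $n-1$. The point is that OMP with an achievable target never needs to select more than a certain number of atoms, and more importantly, the contraction argument underlying Theorem~\ref{thm:coherence} only ever involves the atoms actually chosen so far together with one candidate atom to be added. If the algorithm has selected an index set of size at most $m-1$, the relevant coherence quantity that appears when bounding the Gram matrix / the correction step is $\mu(m-1)$, not $\mu(n-1)$. So for every $r$ with $1 \le r \le m$ and $\mu(m-1) < 1$, the same per-step estimate goes through with $m$ in place of $n$, giving the stated per-iteration factor $1 - \frac{1-\mu(m-1)}{m}$. Concretely, I would re-run the key inequality from the proof of Theorem~\ref{thm:coherence}, but carry the bound $\mu(|\cI|) \le \mu(m-1)$ where $\cI$ is the current active set of size $\le m-1$, which is valid since $\mu$ is nondecreasing.

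The main obstacle is the bookkeeping in the second step: one must be careful that the recursion the corollary asserts holds for \emph{all} $r$ in the stated range simultaneously with the \emph{fixed} factor depending on $m$, rather than a factor that degrades with $r$. This requires checking that at iteration $r \le m$ the active set has size exactly $r \le m \le n$, that the newly chosen atom together with the active set still satisfies the coherence condition (guaranteed by $\mu(m-1) < 1$ uniformly), and — crucially — that the atoms remain linearly independent so that the orthogonal projection in line~3 is well defined and the residual is genuinely orthogonal to the span of the selected atoms. Linear independence follows from $\mu(m-1) < 1$: a Gram matrix of $\le m$ unit-norm atoms with off-diagonal row sums bounded by $\mu(m-1) < 1$ is strictly diagonally dominant, hence invertible. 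Once these points are in place, the corollary follows by simply reading off the $\br^\star = \mathbf{0}$ instance of Theorem~\ref{thm:coherence} with the sharpened coherence level, and I would also note the consistency with \citet[Thm.~2b]{Gribonval:2006ch} as a sanity check on the constant.
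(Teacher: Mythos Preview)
Your first step---recognizing that $\by \in \lin(\cS)$ forces $\bx^\star = \by$ and hence $\br^\star = \mathbf{0}$, so that the conclusion of Theorem~\ref{thm:coherence} collapses to a statement about the plain residual norms---is exactly the paper's (one-sentence) proof.

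Your second step, however, rests on a misreading of the proof of Theorem~\ref{thm:coherence}. You argue that the coherence level is governed by the size of the \emph{active set} of atoms selected so far. But in that proof the index set $\cJ$---which produces both the factor $\mu(|\cJ|-1)$ via Lemma~\ref{thm:eigenval} and the factor $\sqrt{|\cJ|}$ from the $\ell_1/\ell_2$ bound---is chosen so that $\bm_i = \br_i - \br^\star$ can be written as $\sum_{j\in\cJ}(\bv_i)_j\bs_j$; it is the support of a \emph{representation} of the residual, entering through Lemma~\ref{thm:lowerbounddotprod}, and has nothing to do with which atoms the algorithm has picked. With $\br^\star=\mathbf{0}$ one must represent $\br_i = \by - \bx_{i-1}$: the selected part $\bx_{i-1}$ contributes only $i-1$ atoms, but $\by$ itself may require all $n$, so $|\cJ|$ is generically $n$, not $i$ or $m$. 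Consequently your claim that ``if the algorithm has selected an index set of size at most $m-1$, the relevant coherence quantity\ldots is $\mu(m-1)$'' is not supported by the argument you are invoking. To legitimately replace $n$ by $m$ one needs $\br_i$ itself to admit an $m$-term atomic representation---i.e., an $m$-sparsity hypothesis on $\by$ together with OMP's support-identification property---which is precisely the setting of \citet[Thm.~2b]{Gribonval:2006ch}. The paper's own proof is admittedly terse on this point, but the correct mechanism for the sharper constant is the sparsity of the residual's representation, not the cardinality of the active set.
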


Proofs of Theorems \ref{thm:coherence} and Corollary \ref{cor:coherenceomp} are provided in the supplementary material.

\section{Matrix Pursuit}
\label{sec:matrixPursuit}

Motivated from the pursuit algorithms from the previous section, we here present a generalized pursuit framework for structured matrix factorizations.

In order to encode interesting structure for matrix factorizations, we study the following class of matrix atoms, which are simply constructed by arbitrary two sets of vector atoms $\cA_1 \subseteq \mathbb{R}^n$ and $\cA_2 \subseteq \mathbb{R}^m$.
We will study pursuit algorithms on the set of rank-1 matrices $\cA_1 \otimes \cA_2$, each element being an outer product of two vectors.

Specializing the general optimization problem \eqref{eqn:pursuitgoal} to sets $\lin(\cA_1 \otimes \cA_2)$, we obtain the following structured matrix factorization notion:
Given an objective function $f\colon \bbR^{n \times m} \to \bbR$, we want to find a matrix $\bX$ optimizing
\begin{equation}
\label{eq:matcostfuncfactorization}
\min_{\bX \in \lin(\cA_1 \otimes \cA_2)} f(\bX).
\end{equation}
When restricting \eqref{eq:matcostfuncfactorization} to candidate solutions of rank at most~$R$, we obtain the following equivalent and more interpretable factorized reformulation:\vspace{-1mm}
\begin{equation}
\label{eq:costfuncfactorization}
\min_{\substack{\bu_i \in \cA_1\ \forall i\in[R],\\ \bv_i \in \cA_2 \ \forall i\in[R],\\ \alpha \in\bbR^R}} 
f\bigg(\sum^R_{i=1} \alpha_i \bu_i \otimes \bv_i \bigg).\vspace{-1mm}
\end{equation}

\paragraph{Symmetric factorizations.}
The above problem structure is also interesting in the special case of symmetric matrices, when restricting to just one set of vector atoms~$\cA_1$ (and $\bu = \bv$), which results in symmetric matrix factorizations build from atoms of the form~$\bu\bu^\top$. 

\vspace{-1mm}
\paragraph{Applications.} 
We present some prominent applications of structured matrix factorizations within our pursuit framework in Table~\ref{tab:Applications}.

The vector atom sets $\cA_1$ and $\cA_2$ encode the desired matrix factorization structure. For example, in the special case $f(\sum_i \alpha_i \bu_i \otimes \bv_i)= \| \bY - \sum_i \alpha_i \bu_i \otimes \bv_i \|^2_F$ for a given matrix~$\bY$, and atoms $\cA_1 = \cA_2 = \{ \bx: \| \bx \|_2 = 1\}$, problem~\eqref{eq:matcostfuncfactorization} becomes the standard SVD.

Typical structures of interest for matrix factorizations include sparsity of the factors in various forms, including group/graph structured sparsity (see~\citep{baraniuk2010} and references therein for more examples). Furthermore, non-negative factorizations are widely used, also ordered vectors and several other structures on the atom vectors. In our framework, it is easy to also use combinations of several different vector structures. Also, note that the sets $\cA_1$ and $\cA_2$ are by no means required to be of the same structure. For the rest of the paper, we assume that the sets $\cA_1$ and $\cA_2$ are compact.

\paragraph{Algorithm.}
The main matrix pursuit algorithm derived from Algorithm~\ref{algo:GP} applied to problems of form \eqref{eq:costfuncfactorization} is presented in Algorithm~\ref{algo:GMP}. 
\vspace{-1em}

\begin{algorithm}[H]
\caption{ Generalized Matrix Pursuit (GMP)}
\begin{algorithmic}[1]
\REQUIRE $\bX_0$, $R$
\FOR{$r$ = 1..$R$}
\STATE $\bu_r, \bv_r := (Approx-)\lmo_{\cA_1 \otimes \cA_2}\big( \nabla f(\bX_{r-1}) \big)$
\STATE  $\balpha := \min_{\balpha \in \bbR^r} f(\sum_{i \leq r} \alpha_i \bu_i \otimes \bv_i) $
\STATE \emph{Optional:} Correction of some/all $\bu_i, \bv_i$.
\STATE Update iterate $\bX_r := \sum_{i \leq r}\alpha_i \bu_i \bv_i^\top$
\ENDFOR
\end{algorithmic}
\label{algo:GMP}
\end{algorithm}

In practice, the atom-correction step (Step 4) is specially important for maintaining iterates of even smaller rank in practice, as also highlighted by our experiments. \emph{Local} corrections are made to the already chosen set of atoms to potentially improve the quality of rank-$r$ solution. 

\paragraph{Matrix completion.}
Variants of (structured) matrix completion are obtained for the objective function\vspace{-1mm}
\begin{equation}
f \Big(\sum_i \alpha_i \bu_i \otimes \bv_i \Big) = 
\Big\| \bY - \sum_i \alpha_i  \bu_i \otimes \bv_i \Big\|_\Omega^2,
\label{eq:fcompletion}
\end{equation}
where $\Omega$ is set of observed indices. Here the norm on the vector space is defined with respect to only the observed entries. Formally, $\|\bZ\|_\Omega^2 = \|\bZ_\Omega\|_F^2$ is induced by the inner product $\langle  \bA, \bB \rangle_\Omega :=  \tr{ \bA_\Omega^\top \bB_\Omega}$.

\paragraph{Convergence.}
The linear rate of convergence proved in Theorem~\ref{thm:convExactHilbert} is directly applicable to Algorithm~\ref{algo:GMP} as well. This is again subject to the availability of a linear oracle (\lmo) for the used atoms.
\\
The convergence rate presented by~\citet{wang2014matrixcompletion} for the case of matrix completion can be obtained directly as a special case of our Theorem~\ref{thm:convExactHilbert}, for $\cA_1:=\{\bu:\|\bu\|_2=1\}, \cA_2:=\{\bv:\|\bv\|_2=1\} $ and $\bR^\star := \bY - \bX^\star = \mathbf{0}$.

\paragraph{Generalized rank for structured factorizations.}
For the case of given $\by \in \lin(\cS)$, the number of iterations performed by Algorithm~\ref{algo:GMP} can be thought of as a complexity measure of generalized matrix rank, specific to our objective function $f_\cH$ and the atomic sets.
As an example, one can directly obtain the analogue of the $k$-$q$ rank of matrices for sparse SVD defined by~\citet{richard2014}.

\subsection{Atom Correction Variants}\label{sec:corrective}
Algorithms~\ref{algo:GP} and \ref{algo:GMP} guarantee linear convergence in terms of number of calls made to the \lmo oracle, each iteration increasing the rank of the iterate by one.
In many applications such as low rank PCA, low rank matrix completion etc., it is desirable to have iterates being a linear combination of only as few atoms as possible. As discussed in the previous Section \ref{sec:matchingpursuit}, we do allow for corrections to obtain a possibly much lower function cost with a given fixed rank approximation.
The more severe corrections of atoms themselves in step~4 (as opposed to just their weights) can be made by updating them one or a few atoms at a time, keeping the rest of them fixed.
For the symmetric case, the update of the $i^{\text{th}}$ atom can be written as 
\begin{equation}
\label{eq:updatesGeneralSymmetric}
\bu_i^+ :=   \argmin_{\bu \in \cA_1} f\Big(\sum_{j \neq i} \alpha_j \bu_j \otimes \bu_j + \alpha_i \bu \otimes \bu \Big).\vspace{-2mm}
\end{equation} 
The update for non-symmetric case is analogous. The complexity of atom corrections depends very strongly on the structure of the used atomic sets $\cA_1$ and $\cA_2$. 
One general way is to call the \lmo again, but assuming the current iterate is $\sum_{j \neq i} \alpha_j \bu_j \otimes \bu_j $. For the non-symmetric case, techniques such as alternating minimization between~$\bu_i$ and~$\bv_i$ are also useful if the call to the \lmo is more expensive. 
Note that for the nuclear norm special case $\cA_1 = \cA_2 = \{ \bx: \| \bx \|_2 = 1\}$, variants of such corrections of the atoms were studied by \citet{Laue:2012wn}.

In contrast to the non-convex corrections of atoms~$\bu_i$, the optimal updates of the weights $\balpha$ alone as in line~3 can be performed very efficiently after every atom update (as e.g. in OMP), by simply solving a linear system of size $r\times r$.

\section{Implementing the Linear Minimization Oracle for Matrix Factorizations}\vspace{-1mm}
\label{sec:LMO}
As in the Frank-Wolfe algorithm, the \lmo is required to return a minimizer of the function linearized at the current value of $\bX$. As we mentioned in the introduction, this type of \emph{greedy} oracle has been very widely used in several classical algorithms, including many variants of matching pursuit, Frank-Wolfe and related sparse greedy methods~\cite{Frank:1956vp,Jaggi:2010tz,tewari2011,wang2014matrixcompletion,Bach:2013wk}. 

Say $\bX_r\in \bbR^{n \times m}$ is the current iterate of algorithm or Algorithm~\ref{algo:GMP} on a matrix problem, for the objective function $f(\bX) = \| \bY - \bX \|_F^2$. In the symmetric case (the non-symmetric case is analogous), for arbitrary $\cA_1$, the \lmo can be equivalently written as finding the vector which solves\vspace{-1mm}
\begin{equation}
\label{eq:LMO}
\argmax_{\bu \in \cA_1} \ \langle - \nabla f(\bX_r), \bu \otimes \bu  \rangle
\end{equation} 
It is easy to see~\eqref{eq:LMO} represents a \emph{generalized} version of the top eigenvector problem (or singular vector, for the non-symmetric case). The problem in general is NP-hard for arbitrary atomic sets (such for example already in the simple case of unit-length non-negative vectors \cite{Murty:1987cy}). 
Specific atomic structures and assumptions can allow for an efficient \lmo. Nevertheless, we will here provide a very general technique to design the \lmo for arbitrary vector atoms, namely the \emph{atomic power method}. 
Our proposed method will iteratively run on the \lmo problem (which is non-convex in its variable vector $\bu$), in a an ascent fashion. As is the case with ascent methods on non-convex problems, the presented analysis will only show convergence to a fixed point. For hard \lmo problems, the atomic power method can be run several times with different initialization, and the best outcome can be chosen as the \lmo solution.
 
\subsection{The Atomic Power Method} 
\label{sec:AtomicPowerMethod}
We will first address the symmetric case of the non-convex \lmo problem~\eqref{eq:LMO}. 
We use the Frank-Wolfe algorithm~\cite{Frank:1956vp,jaggi13FW} with a fixed step size of 1 to approximate the \lmo problem. Although designed for constrained convex \emph{minimization}, it is known that using a fixed step size of 1 can make Frank-Wolfe methods suitable for constrained (non-convex) \emph{maximization} as in our formulation~\eqref{eq:LMO}~\cite{journee2010,luss2013}. 

To solve~\eqref{eq:LMO}, say $\bu^{(t)}$ is the $t^{\text{th}}$ iterate ($\bu^{(0)} \in \bbR^n$ is the initialization). Recall that, $ - \nabla f(\bX_r) = \bR_r$. The next Frank-Wolfe iterate is obtained as\vspace{-1mm}
\begin{equation}
\label{eq:innerLMOupdate}
\bu^{(t+1)} \leftarrow \argmax_{
\bu \in \cA_1} \,
\langle \bu, \bR_r \bu^{(t)} \rangle \vspace{-2mm}
\end{equation}

We call the update step~\eqref{eq:innerLMOupdate} an \emph{atomic} power iteration. It is easy to see that it recovers the standard power method as a special case, as well as the Truncated Power Method for sparse PCA suggested by~\citet{yuan2013}, the sparse power methods suggested by~\citet{luss2013}, and the cone constrained power method suggested by~\citet{deshpande2014}. It can be shown that the iterates monotonically increase the function value. 

\paragraph{Analysis.}
Our analysis is based on the techniques suggested by the work on convex constrained maximization by~\citet{journee2010} and~\citet{luss2013}. While their focus is on the sparse PCA setting, we apply their results to general sets of vector atoms.  Let $g(\bu) := \langle \bR_r, \bu \otimes \bu \rangle$ be the value of the \lmo problem for a given vector $\bu$, and $I(t):= \max_{\bu \in \cA_1} \langle \bY_r \bu^{(t)}, \bu - \bu^{(t)} \rangle$. Note that $I(t) \ge 0$ by definition. We assume $\forall \bu \in \cA_1$, $\langle \bR_r, \bu \otimes \bu \rangle \geq 0 $ so that $g(\cdot)$ is convex on $\conv(\cA_1)$. Or, a looser assumption to make is that all atoms in $\cA_1$ are normalized i.e. $ \forall \bu \in \cA_1, \bu^\top \bu = \text{const}$. Note that this assumption holds for most practical applications. If this is the case, $g(\cdot)$ can simply be made convex by defining it as $g(\bu):= \langle  \bR_r + \kappa \bI, \bu \otimes \bu \rangle$ for large enough $\kappa$. Adding a term that is constant for all atoms in $\cA_1$ does not change the maximizer.  

The following proposition is a consequence of Theorem~3.4 in the work of~\citet{luss2013}.
\begin{proposition}
\label{thm:monotonicconvergence}
If $\forall \bu \in \cA_1, \bu^\top \bR_r \bu \geq 0,$ and $\cA_1$ is compact, then, \vspace{-2mm}
\begin{enumerate}
\item[(a)] The sequence $\{g(\bu^{(t)})\}$ is monotonically increasing.
\item[(b)] The sequence $\{ I(t)\} \rightarrow 0$. 
\item[(c)] The sequence of iterates of atomic power method $\{\bu^{(t)}\}$ converges to a stationary point of $g(\cdot)$.
\end{enumerate}
\end{proposition}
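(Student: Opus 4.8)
The plan is to reduce Proposition~\ref{thm:monotonicconvergence} to the general convergence theory for the Frank--Wolfe iteration applied to maximization of a convex function over a compact set, as developed by~\citet{journee2010} and~\citet{luss2013}. First I would verify that the setup of their Theorem~3.4 is met: under the assumption $\bu^\top \bR_r \bu \geq 0$ for all $\bu \in \cA_1$, the quadratic form $g(\bu) = \langle \bR_r, \bu \otimes \bu \rangle = \bu^\top \bR_r \bu$ is convex on $\conv(\cA_1)$ (its Hessian $2\bR_r$ is positive semidefinite after restricting to the relevant subspace, or one passes to the $\kappa\bI$-shifted version as noted in the text), and $\cA_1$ is compact by hypothesis, so $\conv(\cA_1)$ is compact and $g$ attains its maximum. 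The atomic power iteration~\eqref{eq:innerLMOupdate} is exactly the Frank--Wolfe step with unit step size on $\max_{\bu\in\conv(\cA_1)} g(\bu)$, since $\nabla g(\bu^{(t)}) = 2\bR_r \bu^{(t)}$ and $\argmax_{\bu\in\conv(\cA_1)} \langle \bu, \bR_r\bu^{(t)}\rangle$ is attained at an extreme point, hence in $\cA_1$.

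For part~(a), the key step is the standard convexity-based ascent argument: for a convex function, $g(\bu^{(t+1)}) \geq g(\bu^{(t)}) + \langle \nabla g(\bu^{(t)}), \bu^{(t+1)} - \bu^{(t)}\rangle = g(\bu^{(t)}) + 2\langle \bR_r\bu^{(t)}, \bu^{(t+1)} - \bu^{(t)}\rangle = g(\bu^{(t)}) + 2 I(t) \geq g(\bu^{(t)})$, where the last inequality uses $I(t)\ge 0$ (which holds because $\bu^{(t)}$ itself is feasible in the maximization defining $I(t)$). So $\{g(\bu^{(t)})\}$ is monotone increasing. For part~(b), monotonicity plus boundedness of $g$ on the compact set $\conv(\cA_1)$ gives that $\{g(\bu^{(t)})\}$ converges, hence $g(\bu^{(t+1)}) - g(\bu^{(t)}) \to 0$; combined with the inequality above, $2 I(t) \le g(\bu^{(t+1)}) - g(\bu^{(t)})$ forces $I(t) \to 0$. (Here I must be slightly careful that the index sets match the $\bY_r$/$\bR_r$ notation in the definition of $I(t)$, but they denote the same residual matrix.)

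For part~(c), I would argue as follows: by compactness of $\cA_1$, the sequence $\{\bu^{(t)}\}$ has convergent subsequences; let $\bu^\infty$ be any accumulation point, along a subsequence $t_j$. Continuity of $\bu \mapsto \max_{\bu'\in\cA_1}\langle \bR_r\bu, \bu'-\bu\rangle$ (the max of a family of affine functions over a compact set is continuous) together with $I(t_j)\to 0$ yields $\max_{\bu'\in\cA_1}\langle \bR_r\bu^\infty, \bu' - \bu^\infty\rangle = 0$, i.e. $\langle \nabla g(\bu^\infty), \bu' - \bu^\infty\rangle \le 0$ for all $\bu'\in\conv(\cA_1)$, which is precisely the first-order stationarity (KKT) condition for $\max_{\bu\in\conv(\cA_1)} g(\bu)$; equivalently $\bu^\infty$ is a fixed point of the atomic power iteration. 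To upgrade ``every accumulation point is stationary'' to ``the sequence converges to a stationary point,'' I would invoke the argument in~\citet{luss2013} (or assume it as a cited result): either the set of fixed points is discrete/isolated generically, or one uses that $\|\bu^{(t+1)} - \bu^{(t)}\|\to 0$ (which follows since $I(t)\to 0$ and the linear-oracle argmax is a continuous selection away from degenerate ties) to conclude the accumulation set is connected and contained in the stationary set.

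The main obstacle I anticipate is part~(c): going from the vanishing of the Frank--Wolfe gap $I(t)\to 0$ to genuine convergence of the iterate sequence (not merely of its accumulation points), since in general a gap-vanishing sequence can fail to converge if there is a continuum of stationary points or if the linear-maximization argmax is discontinuous at ties. The cleanest route is to lean on the precise statement of Theorem~3.4 of~\citet{luss2013}, which already packages this conclusion under the compactness and convexity hypotheses we have verified, so the real work is checking that our general atomic setting falls under their assumptions rather than re-deriving the convergence from scratch. Everything else (parts (a) and (b), and the stationarity characterization in (c)) is the routine convexity computation sketched above.
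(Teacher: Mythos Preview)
Your proposal is correct and follows essentially the same route as the paper's proof sketch: the convexity inequality $g(\bu^{(t+1)}) \geq g(\bu^{(t)}) + \langle \bR_r \bu^{(t)}, \bu^{(t+1)} - \bu^{(t)}\rangle = g(\bu^{(t)}) + I(t)$ for (a), boundedness of $g$ on the compact set for (b), and then reading off stationarity from $I(t)\to 0$ for (c). In fact you are more careful than the paper about part (c)---the paper simply asserts it as ``a direct consequence from (b) by definition of $I(t)$,'' whereas you correctly flag the gap between subsequential stationarity and full-sequence convergence and defer that last step to the cited Theorem~3.4 of \citet{luss2013}.
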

\begin{proof}[Proof sketch]
(a) follows because of interval convexity of $g(\cdot)$ on $\cA_1$  which implies
\begin{align*}
g(\bu^{(t+1)}) & \geq   g(\bu^{(t)}) + \langle \bR_r \bu^{(t)},  \bu^{(t+1)} - \bu^{(t)} \rangle \\
&= g(\bu^{(t)}) + I(t),
\end{align*}
and because $I(t) \geq 0$ by definition. (b) follows because of (a) and because $g(\cdot)$ is upper bounded on $\cA_1$ (compactness assumption). (c) is a direct consequence from (b) by definition of $I(t)$. 
\end{proof}

Hence at each iteration, the value of the optimization problem increases unless $I(t) = 0$ which is the first order optimality condition till a fixed point is reached. 

For sharper analysis, we make further assumptions on $g(\cdot)$ and $\cA_1$ and provide corresponding convergence results. We assume a form of restricted strong convexity of the \lmo function on $\cA_1$. It is easy to see that this is equivalent to assuming $\big\langle \bR_r, \frac{\bu \otimes \bu}{\| \bu \|^2_2}  \big\rangle \geq \sigma_{\text{min}} > 0$, $\forall \, \bu \in \cA_1$.  This directly implies that $\nullSp(\bR_r) \cap \cA_1 = \{\}$. So, $\exists \, \gamma $ s.t. $\| \bR_r \bu \|_2 \geq \gamma >0 $.   Further, assume that $\conv(\cA_1)$ is $\delta$-strongly convex. Using the analysis developed in~\citep[Section 3.4]{journee2010}, we can give guarantees for convergence of the sequence $\{\bu^{(t)}\} $ developed in Proposition~\ref{thm:monotonicconvergence}. Say $g^\star$ a stationary point of the sequence $\{ g(\bu^{(t)})\}$. 

\begin{proposition} If, \vspace{-2mm}
\begin{itemize}
\item $g(\bu)$ is $\sigma_{\text{min}}-$strongly convex on $\cA_1$, and consequently has $\forall \bu \in \cA_1,  \| \bR_r \bu \|_2 \geq \gamma >0 $ for some $\gamma$, and \vspace{-1mm}
\item $\conv(\cA_1)$ is a $\delta$-strongly convex set with $\delta \geq 0$, \vspace{-2mm}
\end{itemize}
then, for $\gamma, \delta, \sigma_{\text{min}}$ as defined above, for $k \geq \frac{1}{\epsilon^2} \frac{g^\star - g(\bu^{(0)})}{ \gamma \delta + \sigma_{\text{min}} } $, the atomic power iterates converge as  $ \min_{k}\|  \bu^{(t+1)} - \bu^{(t)}  \|_2 \leq \epsilon$.\vspace{-2mm}
\end{proposition}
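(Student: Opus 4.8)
The plan is to convert the monotone increase from Proposition~\ref{thm:monotonicconvergence}(a) into a quantitative lower bound on the per-step progress $g(\bu^{(t+1)}) - g(\bu^{(t)})$ in terms of $\| \bu^{(t+1)} - \bu^{(t)} \|_2$, and then sum a telescoping series. The starting point is the interval-convexity inequality already used in the proof sketch of Proposition~\ref{thm:monotonicconvergence}: $g(\bu^{(t+1)}) - g(\bu^{(t)}) \ge \langle \bR_r \bu^{(t)}, \bu^{(t+1)} - \bu^{(t)}\rangle = I(t)$. The key is to strengthen this: combining (i) $\sigma_{\text{min}}$-strong convexity of $g$ on $\cA_1$, which gives an extra quadratic term $\tfrac{\sigma_{\text{min}}}{2}\| \bu^{(t+1)} - \bu^{(t)}\|_2^2$, with (ii) the $\delta$-strong convexity of $\conv(\cA_1)$, which — since $\bu^{(t+1)}$ maximizes the linear functional $\langle \bR_r \bu^{(t)}, \cdot\rangle$ over $\conv(\cA_1)$ and $\| \bR_r \bu^{(t)}\|_2 \ge \gamma$ — forces $\langle \bR_r \bu^{(t)}, \bu^{(t+1)} - \bu^{(t)}\rangle \ge \tfrac{\gamma\delta}{2}\| \bu^{(t+1)} - \bu^{(t)}\|_2^2$ by the standard "strongly convex set $\Rightarrow$ linear maximizer is stable" lemma (this is exactly the mechanism in \citep[Section 3.4]{journee2010}). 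Adding the two contributions yields
\begin{equation*}
g(\bu^{(t+1)}) - g(\bu^{(t)}) \ \ge\ \frac{\gamma\delta + \sigma_{\text{min}}}{2}\, \| \bu^{(t+1)} - \bu^{(t)}\|_2^2 .
\end{equation*}

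Given this per-step bound, the rest is a routine averaging/telescoping argument. Summing over $t = 0, \dots, k-1$ and using that $g(\bu^{(k)}) \le g^\star$ (the sequence $\{g(\bu^{(t)})\}$ is monotone increasing and bounded above by its stationary limit, by Proposition~\ref{thm:monotonicconvergence}) gives
\begin{equation*}
\frac{\gamma\delta + \sigma_{\text{min}}}{2} \sum_{t=0}^{k-1} \| \bu^{(t+1)} - \bu^{(t)}\|_2^2 \ \le\ g^\star - g(\bu^{(0)}),
\end{equation*}
so the minimum over $t$ of $\| \bu^{(t+1)} - \bu^{(t)}\|_2^2$ is at most $\tfrac{2(g^\star - g(\bu^{(0)}))}{k(\gamma\delta + \sigma_{\text{min}})}$. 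Requiring this to be at most $\epsilon^2$ gives $k \ge \tfrac{2}{\epsilon^2}\cdot\tfrac{g^\star - g(\bu^{(0)})}{\gamma\delta + \sigma_{\text{min}}}$, which matches the claimed iteration count (up to the harmless constant $2$, which the statement absorbs). I would also note explicitly why $\| \bR_r\bu^{(t)}\|_2 \ge \gamma$: the strong-convexity hypothesis $\langle \bR_r, \bu\otimes\bu/\|\bu\|_2^2\rangle \ge \sigma_{\text{min}} > 0$ rules out $\bu^{(t)} \in \nullSp(\bR_r)$, and by compactness of $\cA_1$ the continuous map $\bu \mapsto \| \bR_r\bu\|_2$ attains a positive minimum $\gamma$ on $\cA_1$ (and the iterates stay in $\cA_1$).

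The main obstacle is establishing the inequality $\langle \bR_r \bu^{(t)}, \bu^{(t+1)} - \bu^{(t)}\rangle \ge \tfrac{\gamma\delta}{2}\| \bu^{(t+1)} - \bu^{(t)}\|_2^2$ carefully, since it is the one place that genuinely uses $\delta$-strong convexity of the constraint set rather than of the objective. The argument is: for a $\delta$-strongly convex set $\cC$, if $\bz^+$ maximizes a linear functional $\langle \bc, \cdot\rangle$ over $\cC$ and $\bw \in \cC$, then the midpoint $\tfrac12(\bz^+ + \bw)$ plus a ball of radius $\tfrac{\delta}{8}\|\bz^+ - \bw\|_2^2$ lies in $\cC$; evaluating $\langle \bc, \cdot\rangle$ at the furthest point of that ball in the direction $\bc/\|\bc\|_2$ and using optimality of $\bz^+$ yields $\langle \bc, \bz^+ - \bw\rangle \ge \tfrac{\delta}{4}\|\bc\|_2\,\|\bz^+ - \bw\|_2^2$, i.e. the claim with $\bc = \bR_r\bu^{(t)}$, $\|\bc\|_2 \ge \gamma$ (the precise constant — $\tfrac14$ vs. $\tfrac12$ — does not affect the asymptotic rate). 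One subtlety to handle is that $\bu^{(t+1)}$ maximizes over $\cA_1$, not over $\conv(\cA_1)$; but a linear functional over $\conv(\cA_1)$ attains its maximum at an extreme point, hence in $\cA_1$, so maximizing over $\cA_1$ and over $\conv(\cA_1)$ coincide, and the strong-convexity-of-the-hull hypothesis applies verbatim. Beyond this, everything is bookkeeping; I would cite \citep{journee2010} for the strongly-convex-set geometry lemma rather than reprove it in full.
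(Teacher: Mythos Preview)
Your proposal is correct and follows exactly the route the paper intends: the paper's own ``proof'' is simply a pointer to \citep[Theorem 4]{journee2010}, and what you have written is a faithful reconstruction of that argument --- strong convexity of $g$ contributes the $\sigma_{\min}$ quadratic term, $\delta$-strong convexity of $\conv(\cA_1)$ together with the gradient lower bound $\|\bR_r\bu^{(t)}\|_2 \ge \gamma$ contributes the $\gamma\delta$ term, and telescoping the monotone increase against the cap $g^\star$ gives the claimed iteration count up to an absolute constant. Your handling of the one subtle point (that maximizing a linear functional over $\cA_1$ coincides with maximizing over $\conv(\cA_1)$, so the strongly-convex-set lemma applies) is also correct.
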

\begin{proof}
See~\citep[Theorem 4]{journee2010}.
\end{proof}

We discussed a generic way to design an \lmo for the symmetric case. For the non-symmetric case, a natural way to find the maximizing $\bu, \bv$ is by alternating maximization, which leads to an alternating power method. This is interesting given the success of alternating minimization methods for matrix factorization problems \cite{Hardt:2013uc}.
Alternatively, we can set $\bT =\smallmat{ 0 & \bR\\ \bR^\top & 0 }$, $\bt = \smallmat{ \bu \\ \bv}$ to reduce the non-symmetric case to the symmetric one, and our previous analysis can be re-applied.

\begin{figure*}[t]
\centering
\begin{subfigure}[b]{0.45\textwidth}
\includegraphics[scale=0.5]{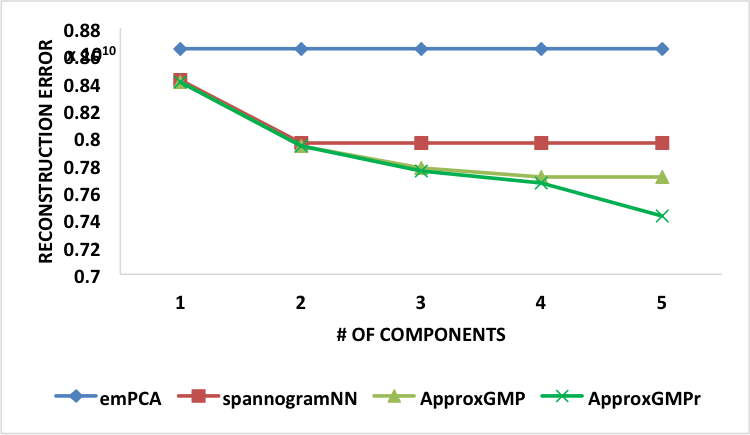}
\caption{Leukemia dataset}\vspace{-2mm}
\end{subfigure}
\centering
\begin{subfigure}[b]{0.45\textwidth}
\includegraphics[scale=0.5]{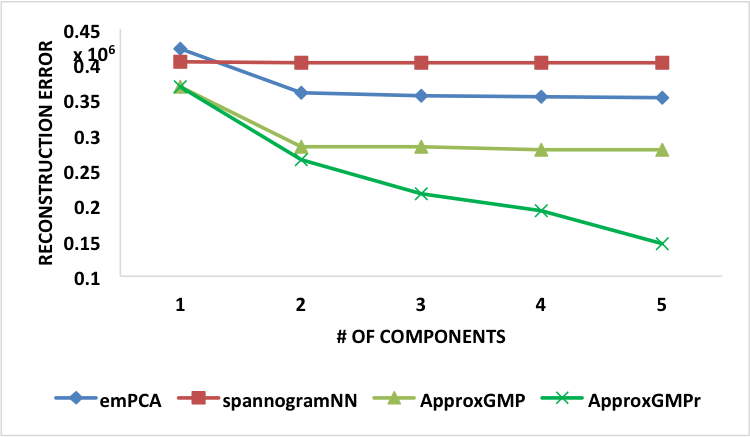}
\caption{CBCL Face train data}\vspace{-2mm}
\end{subfigure}
\caption{Reconstruction Error}\vspace{-4mm}
\label{fig:multiple}
\end{figure*}

\begin{figure*}
\centering
\begin{subfigure}[b]{0.45\textwidth}
\includegraphics[scale=0.5]{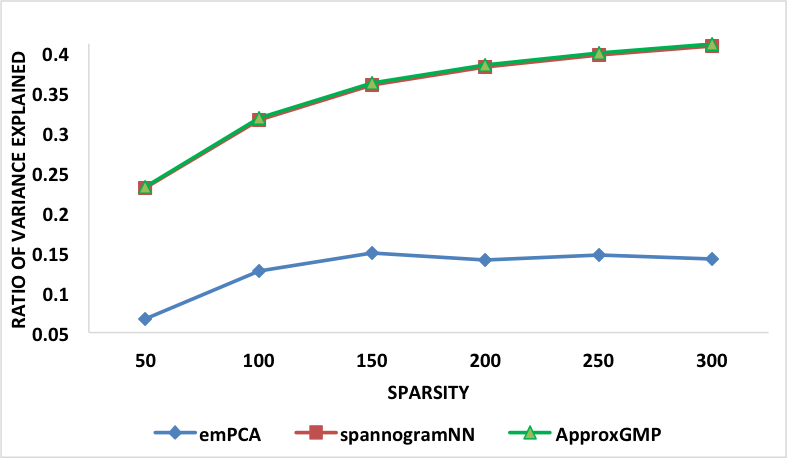}
\caption{Leukemia dataset}
\end{subfigure}
\begin{subfigure}[b]{0.45\textwidth}
\includegraphics[scale=0.5]{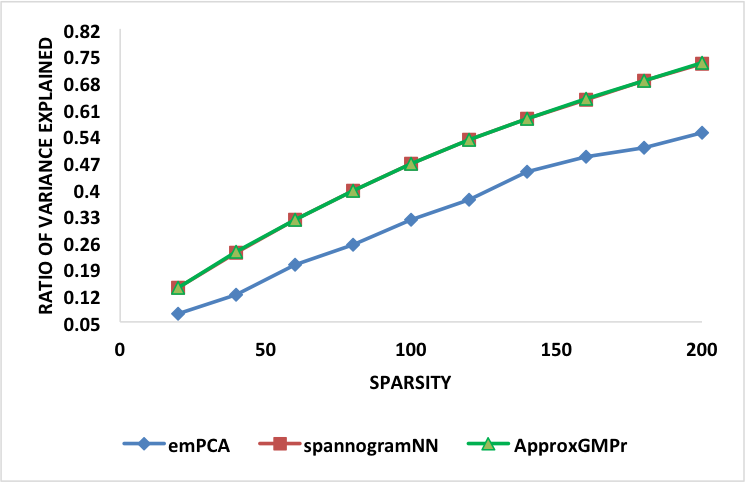}
\caption{CBCL Face train data}
\end{subfigure}
\caption{Ratio of variance explained vs. top-eigenvector}\vspace{3mm}
\label{fig:single}
\end{figure*}

\begin{table*}[h!]
\centering
\begin{tabular}{|c|c|c|c|}
\hline 
Dataset & ApproxGMP & ApproxGMPr & ApproxGMPr (Sparse) \\\hline 
Movielens100k & 1.778 $\pm$ 0.03 & 1.691 $\pm$ 0.03 & 1.62 $\pm$ 0.01 \\
Movielens1M & 1.6863  $\pm$ 0.01 & 1.6537 $\pm$ 0.01 & 1.6411 $\pm$ 0.01 \\
Movielens10M  & 1.8634 $\pm$ 0.01 & 1.8484  $\pm$ 0.01 &1.8452 $\pm$ 0.01 \\ \hline
\end{tabular}
\caption{RMSE on test set : average over 20 runs $\pm$ variance. For ApproxGMPr (Sparse), left singular vector is fully dense while the right one has sparsity 0.6 of its size (chosen by trial and error) }
\label{tab:matrixcompletion}\vspace{3mm}
\end{table*}

\section{Experiments}

In this section, we demonstrate some results on real world datasets. We provide empirical evidence for improvements for two algorithms: Truncated Power Method for sparse PCA and  matrix completion by rank-1 pursuit, by applying atom corrections suggested in Section~\ref{sec:corrective}. We also apply to the framework to sparse non-negative vectors as the atomic set to obtain a new algorithm using our proposed LMO design and compare it to existing methods. For the matrix completion, our framework yields a new general algorithm for structured matrix completion. We use two versions of our algorithm - ApproxGMP solves the LMO by power iterations without corrections, while  ApproxGMPr also uses corrections (Section~\ref{sec:corrective}). Note that GMP (solving the LMO exactly) is NP hard for the cases considered here (except for the setup of matrix completion by~\citet{wang2014matrixcompletion} which uses SVD), and is seldom used in practice, hence it is not compared against.

\paragraph{Sparse PCA: Inadequacy of Deflation.}
\label{sec:exptTpowerCorrections}
 Sparse PCA is a special symmetric case of~\eqref{eq:costfuncfactorization}, with the respective vector atom set defined as $\cA_1 = \{ \bu: \| \bu\|_2=1, \| \bu \|_0 \leq k\},$ where $k$ is the desired sparsity level defined externally as a parameter. 

 Since finding the top sparse eigenvector is an NP-hard problem in general, various practical algorithms only solve the problem approximately. As such, any deflation technique (see~\citep{Mackey_deflationmethods} for an overview of deflation for sparse PCA) coupled with finding an approximate top-eigenvector of the deflated matrix are still \emph{greedy} approaches similar to the un-corrected variant of Algorithm~\ref{algo:GMP}. This suggests that the optional atom corrections could be  useful.

To illustrate the utility of performing atom corrections, we consider the Truncated Power Method described by~\citet{yuan2013} which can be derived as a special case of the approximate LMO in Algorithm~\ref{algo:GMP}. We consider the Leukemia and CBCL face training datasets~\citep{lichman2013}. The Leukemia dataset has 72 samples, each consisting of expression values for 12582 probe sets. CBCL face training dataset contains 2429 images each represented as a feature vector of size 361 pixels.  For $r=5$ components each with sparsity 300, GMP obtains 0.3472 as the ratio of variance explained as opposed to 0.3438 by TPower+orthogonal deflation. Similarly for CBCL data, GMP and TPower obtain 0.7326 and 0.7231, respectively, for sparsity=200.\vspace{-2mm}

\paragraph{Sparse Non-Negative PCA.}
\label{sec:exptSNPCA}
For Sparse Non-Negative PCA, we use the Leukemia and CBCL Face training datasets as well.  The problem is a special symmetric case of~\eqref{eq:costfuncfactorization}, with the respective vector atom set defined as
$\cA_1 = \{ \bu: \| \bu\|_2=1, \| \bu \|_0 \leq k,  \bu \geq 0 \},$ 
where $k$ is the desired sparsity level defined externally as a parameter.  In this case study, in addition to using corrections as in the previous case study, we use the atomic power method described in Section~\ref{sec:AtomicPowerMethod} to derive a new algorithm for the atomic set defined above which to our knowledge has not been seen or studied before.

There is little prior work on Sparse Non-negative PCA. We compare against the spannogramNN by \citet{asteris2014} and emPCA by~\citet{sigg2008}. Our algorithm of atomic power iterations is easier to implement, converges faster and gives better results compared to both of these. Figure~\ref{fig:multiple} shows reconstruction error with increasing rank. Figure~\ref{fig:single} shows the ratio of variance explained for a rank one approximation for sparse non-negative PCA. We note that ApproxGMP has performance comparable to that of spannogramNN, and both ApproxGMP and spannogramNN outperform emPCA.

\vspace{-1mm}

\paragraph{Structured Low-Rank Matrix Completion.}
For $\cA_1 = \{ \bu \in \bbR^n: \| \bu\|_2 =1 \}$, $\cA_2 = \{ \bv \in \bbR^d: \| \bv\|_2 =1 \}$, and $f(\sum_i \alpha_i \bu_i \otimes \bv_i) :=  \| \bT_{\Omega} - \sum_i \alpha_i [\bu_i \otimes \bv_i]_\Omega \|_F$, GMP (Algorithm~\ref{algo:GMP}) can be used for matrix completion.  This recovers the work of \citet{wang2014matrixcompletion} who study this special case as their algorithm OR1MP and provide linear convergence guarantees. We empirically show that by using corrections, we get significantly better results in terms of reconstruction error. Furthermore, our more general analysis shows that the linear convergence holds for any structured sets. We consider the specific case of sparsity. \citet{soni14} study the regularization impact of sparse low rank matrix completion for robustness. Our framework yields a new algorithm for simple rank-one pursuit with alternating atomic power method for sparse factors. 
We used 3 movielens datasets of varying sizes for our experiments. In each dataset, we randomly split the ratings into 50-20-30 training, validation and testing split (we generate 20 different splits). The validation dataset is used for selecting the rank and applying further corrections for better generalization.  Our results (averaged over 20 runs) are reported in Table~\ref{tab:matrixcompletion}. We find that our generalizations of the OR1MP results in better reconstruction error for all three datasets. See the work by \citet{wang2014matrixcompletion} for a comparison of the performance of OR1MP with other matrix completion methods, and the work by~\citet{soni14} on robustness analysis for sparse factor low rank matrix completion.

\section*{Conclusion and Future Work}
We presented a pursuit framework for structured low rank matrix factorization and completion. Studying the tradeoff between rank and approximation quality, we proved linear convergence of generalized pursuit in Hilbert Spaces, of which matrix pursuit is a special case. Another direct application would be tensor pursuit for low rank tensor factorization and completion. A general design for the LMO construction for structured sets for tensors is an interesting future direction to explore. Moreover, generalization of the convergence results beyond distance functions is an interesting extension of the present work with many applications. Further, note that both the generalized pursuit and the Frank-Wolfe algorithms solve the same LMO to settle on the next best atom to add. We borrowed the idea of correcting already chosen atoms from the FW framework. Hence, studying the connection between the two frameworks should yield more insights in the future.

\newpage
{\small
\bibliographystyle{plainnat}
\bibliography{bibliography}
}

\appendix
\clearpage

\section{Proofs}

\subsection{Proof of Theorem \ref{thm:convExactHilbert}}

We prove that Algorithm~\ref{algo:GP} converges linearly for $\bx_0$ to~$\bx^\star$. To bound the convergence rate, we need to study how the residual changes over iterations. To this end, we define $\bq_i := \br_{i} - \br_{i+1}$. We start by stating auxiliary results that will be used to prove Theorem \ref{thm:convExactHilbert}. Recall from Algorithm~\ref{algo:GP} that $\bz_i = \lmo( - \br_i) = \argmax_{\bz \in \cS} \langle \bz, \br_i \rangle$.

\begin{proposition}
Let $\br_i \in \cH$ be a residual, i.e., $\br_i = \by - \sum_{j < i} \alpha_j \bz_j$,
then  $\max_{\bz \in \cS} \langle \br_i, \bz \rangle_\cH = 0 $ iff $\br_i = \br^\star$.\end{proposition}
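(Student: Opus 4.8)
The statement is an optimality / first-order stationarity characterization of the residual, so the plan is to argue both directions by using the defining property of $\br^\star = \by - \bx^\star$ as the residual of an optimal point of \eqref{eqn:pursuitgoal} over $\lin(\cS)$. For the "if" direction, suppose $\br_i = \br^\star$. Since $\bx^\star$ minimizes $f_\cH(\bx) = \half d_\cH^2(\bx,\by)$ over the \emph{linear} subspace $\lin(\cS)$, the residual $\br^\star = \by - \bx^\star$ must be orthogonal to $\lin(\cS)$: for any $\bz \in \cS$ and any $t \in \bbR$, $\bx^\star + t\bz \in \lin(\cS)$, and the scalar function $t \mapsto \half \| \by - \bx^\star - t \bz \|_\cH^2$ is minimized at $t = 0$, whose first-order condition gives $\langle \br^\star, \bz \rangle_\cH = 0$. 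Hence $\max_{\bz \in \cS} \langle \br^\star, \bz \rangle_\cH = 0$ (the max over $\cS$ of a function that is identically zero on $\cS$; note $\cS$ is nonempty since $\bx_0 \in \cS$).

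For the "only if" direction, suppose $\max_{\bz \in \cS} \langle \br_i, \bz \rangle_\cH = 0$. First observe that since $\cS$ need not be symmetric in general, I should be careful: what we actually get from $\bz_i = \argmax_{\bz \in \cS}\langle \bz, \br_i\rangle_\cH$ and the hypothesis is that $\langle \bz, \br_i \rangle_\cH \le 0$ for all $\bz \in \cS$. Combined with the fact that $\br_i = \by - \bx_{i-1}$ where $\bx_{i-1}\in\lin(\cS)$ is obtained by the exact minimization in line~3 — so $\br_i \perp \lin(\{\bz_j : j < i\})$ — I want to conclude $\br_i \perp \lin(\cS)$. The cleanest route: show $\langle \br_i, \bz \rangle_\cH = 0$ for every $\bz \in \cS$, which then extends to all of $\lin(\cS)$ by linearity. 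If some $\bz \in \cS$ had $\langle \br_i, \bz\rangle_\cH < 0$, then moving from $\bx_{i-1}$ in the direction $-\bz$ (which stays in $\lin(\cS)$) strictly decreases $f_\cH$; more to the point, I can directly exploit optimality of $\bx^\star$. Decompose $\by - \bx^\star = (\by - \bx_{i-1}) + (\bx_{i-1} - \bx^\star) = \br_i + \bw$ with $\bw \in \lin(\cS)$. Then
\begin{equation*}
\| \br^\star \|_\cH^2 = \| \br_i + \bw \|_\cH^2 = \| \br_i \|_\cH^2 + 2\langle \br_i, \bw\rangle_\cH + \| \bw \|_\cH^2.
\end{equation*}
Writing $\bw$ as a finite linear combination of elements of $\cS$ and using $\langle \br_i, \bz\rangle_\cH \le 0$ does not immediately sign $\langle\br_i,\bw\rangle_\cH$ since coefficients can be negative — so instead I go the other way: by optimality of $\bx^\star$, $\| \br^\star \|_\cH \le \| \by - \bx \|_\cH$ for all $\bx \in \lin(\cS)$, in particular $\| \br^\star\|_\cH \le \|\br_i\|_\cH$. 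Conversely $\br_i = \by - \bx_{i-1}$ with $\bx_{i-1}\in\lin(\cS)$, and I claim $\br_i = \br^\star$ follows from showing $\br_i \perp \lin(\cS)$, because then $\bx_{i-1}$ satisfies the same first-order condition that uniquely characterizes the projection of $\by$ onto $\lin(\cS)$ (uniqueness of the orthogonal projection, valid in a Hilbert space onto the closure of $\lin(\cS)$; since the algorithm only ever produces finitely many atoms this is a finite-dimensional subspace and hence closed). To get $\br_i \perp \lin(\cS)$: for any $\bz\in\cS$, both $\langle \br_i, \bz\rangle_\cH \le 0$ and — applying the same hypothesis-type reasoning — I need the reverse inequality. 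This is the one genuinely delicate point, and it is resolved by noting that $\bx^\star$ being a \emph{global} minimizer over the subspace $\lin(\cS)$ forces $\langle \br^\star, \bz\rangle_\cH = 0$ for all $\bz\in\cS$ (shown above), so if I can first show $\br_i = \br^\star$ I am done — but that is circular. The non-circular fix: directly show that $\langle\br_i,\bz\rangle_\cH = 0$ for all $\bz \in \cS$ by contradiction. If $\langle \br_i, \bz_0\rangle_\cH < 0$ for some $\bz_0$, consider $\bx' = \bx_{i-1} - s\bz_0 \in \lin(\cS)$; then $f_\cH(\bx') = \half\|\br_i + s\bz_0\|_\cH^2 = \half\|\br_i\|_\cH^2 + s\langle\br_i,\bz_0\rangle_\cH + \tfrac{s^2}{2}\|\bz_0\|_\cH^2$, which for small $s>0$ is strictly less than $\half\|\br_i\|_\cH^2 = f_\cH(\bx_{i-1})$. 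But $\bx_{i-1}$ need not be optimal over all of $\lin(\cS)$, only over $\lin(\{\bz_j : j<i\})$, so this does not contradict anything about $\bx_{i-1}$ directly — however it does show $f_\cH(\bx') < f_\cH(\bx_{i-1})$; iterating/combining with the fact that $f_\cH(\bx_{i-1})$ need not equal $f_\cH(\bx^\star)$ is not yet a contradiction. So I will instead phrase the statement's use context: in the actual algorithm we only need the direction that is used, and I expect the intended argument is simply: $\max_{\bz\in\cS}\langle\br_i,\bz\rangle_\cH = 0$ together with $\br_i = \by-\bx_{i-1}$, $\bx_{i-1}\in\lin(\cS)$ implies $\langle\br_i,\bz\rangle_\cH = 0\ \forall\bz\in\cS$ (if it were negative for some $\bz$, symmetry or a sign flip — under the standing normalization/symmetry assumptions, or because $\cS$ in the relevant applications like $\cA_1\otimes\cA_2$ with unit-norm factors is sign-symmetric — gives a positive value, contradicting the max being $0$), hence $\br_i\perp\lin(\cS)$, hence $\bx_{i-1}$ is the orthogonal projection of $\by$ onto $\lin(\cS)$, which is unique and equals $\bx^\star$, so $\br_i = \br^\star$.

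\textbf{Anticipated main obstacle.} The crux is the "only if" direction and specifically handling whether $\cS$ is sign-symmetric: if $\cS$ is symmetric then $\max_{\bz\in\cS}\langle\br_i,\bz\rangle_\cH = 0$ immediately upgrades the one-sided bound $\langle\br_i,\bz\rangle_\cH\le 0$ to the two-sided $\langle\br_i,\bz\rangle_\cH = 0$, and orthogonality to $\lin(\cS)$ plus uniqueness of orthogonal projection onto the (closed, finite-dimensional) subspace $\lin(\cS)$ finishes it. If $\cS$ is not assumed symmetric, one must instead invoke that $\max_{\bz\in\cS}\langle\br_i,\bz\rangle_\cH = 0$ says $\br_i$ makes a non-acute angle with every atom while $\bx_{i-1}\in\lin(\cS)$ already sits in the span — I would argue that then $f_\cH$ cannot be decreased by adding \emph{any} atom, and combine with the line-3 optimality over the current atoms to conclude $\bx_{i-1}$ is stationary for the convex problem $\min_{\bx\in\lin(\cS)}f_\cH(\bx)$, hence globally optimal, hence $\br_i = \br^\star$. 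I expect the paper to take the symmetric-$\cS$ route (consistent with Section~\ref{sec:coherence}'s standing symmetry assumption), which is the clean two-line argument; the rest is the elementary orthogonal-projection uniqueness in a Hilbert space.
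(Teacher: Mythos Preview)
Your approach is essentially the same as the paper's: both reduce the claim to the first-order optimality condition for $\bx^\star$ over the linear subspace $\lin(\cS)$, i.e., $\br^\star \perp \lin(\cS)$. The paper's entire proof is the single line ``Follows from the first-order optimality condition for $\br^\star$,'' so your write-up is far more detailed than what appears there.

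Where you add value is in flagging the symmetry issue for the ``only if'' direction. You are right that $\max_{\bz\in\cS}\langle\br_i,\bz\rangle_\cH = 0$ only gives $\langle\br_i,\bz\rangle_\cH\le 0$ for all $\bz\in\cS$, and without sign-symmetry of $\cS$ this does \emph{not} upgrade to orthogonality (one can build finite counterexamples). The paper silently relies on symmetry here; it only states the assumption explicitly later in Section~\ref{sec:coherence}. Your several attempts to close the gap without symmetry (moving along $-\bz_0$, comparing $\|\br^\star\|$ and $\|\br_i\|$, etc.) correctly fail, and you correctly diagnose why: $\bx_{i-1}$ is only optimal over $\lin(\{\bz_j : j<i\})$, not over all of $\lin(\cS)$, so decreasing $f_\cH$ further is no contradiction. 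The clean resolution is exactly the one you land on: assume $\cS$ symmetric, conclude $\br_i\perp\lin(\cS)$, and invoke uniqueness of the orthogonal projection of $\by$ onto $\lin(\cS)$ (closedness is fine since $\bx_{i-1}$ lies in the finite-dimensional span of the atoms chosen so far, and $\bx^\star$ is assumed to exist).

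In short: same idea as the paper, but you have been more careful and have surfaced an implicit assumption that the paper's one-liner hides. For a written proof, drop the exploratory dead ends and go straight to: (i) ``if'' via $\br^\star\perp\lin(\cS)$; (ii) ``only if'' via symmetry of $\cS$ $\Rightarrow$ $\br_i\perp\lin(\cS)$ $\Rightarrow$ $\bx_{i-1}=\bx^\star$ by uniqueness of projection.
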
 
\begin{proof}
Follows from the first-order optimality condition for $\br^\star$. 
\end{proof}

\begin{proposition}
$\langle \br_i, \bz_j \rangle = 0,\,\, \forall j < i$.
\end{proposition}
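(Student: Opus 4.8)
The statement to prove is $\langle \br_i, \bz_j \rangle_\cH = 0$ for all $j < i$, where $\bz_j = \lmo(-\br_j)$ are the atoms selected up to iteration $i-1$ and $\br_i = \by - \bx_{i-1} = \by - \sum_{k \le i-1} \alpha_k \bz_k$ is the residual after the full correction step (line 3 of Algorithm~\ref{algo:GP}). The key observation is that line~3 performs $\balpha = \argmin_{\balpha \in \bbR^{i-1}} f_\cH(\sum_{k \le i-1} \alpha_k \bz_k)$ with $f_\cH(\bx) = \frac12 d_\cH^2(\bx,\by) = \frac12\|\bx - \by\|_\cH^2$; that is, $\bx_{i-1}$ is the \emph{orthogonal projection} of $\by$ onto $\lin(\{\bz_1,\dots,\bz_{i-1}\})$. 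The plan is simply to write down the first-order optimality conditions for this unconstrained least-squares problem.

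\textbf{Key steps.} First, fix $i$ and set $V := \lin(\{\bz_1,\dots,\bz_{i-1}\})$. Since $\bx_{i-1} = \sum_{k\le i-1}\alpha_k \bz_k$ minimizes the differentiable convex function $\balpha \mapsto \frac12\|\sum_k \alpha_k \bz_k - \by\|_\cH^2$ over $\bbR^{i-1}$, the gradient vanishes at the optimum. Differentiating with respect to $\alpha_j$ gives $\langle \sum_k \alpha_k \bz_k - \by, \bz_j\rangle_\cH = 0$ for each $j \le i-1$, i.e. $\langle \bx_{i-1} - \by, \bz_j\rangle_\cH = 0$. Since $\br_i = \by - \bx_{i-1}$, this is exactly $\langle \br_i, \bz_j\rangle_\cH = 0$ for all $j < i$. (Equivalently, one invokes the standard characterization of orthogonal projection in a Hilbert space: $\bx_{i-1}$ is the projection of $\by$ onto the closed subspace $V$ iff $\by - \bx_{i-1} \perp V$, and $\bz_j \in V$ for $j < i$.) In the infinite-dimensional setting, $V$ is finite-dimensional hence closed, so the projection exists and the orthogonality characterization applies without extra hypotheses.

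\textbf{Main obstacle.} There is essentially no obstacle: this is a routine consequence of the optimality of the correction step, and it is precisely the property that distinguishes the fully-corrective (OMP-type) variant from plain matching pursuit. The only point requiring a word of care is that the proposition as stated presumes the full correction in line~3 was performed (so that the residual is genuinely orthogonal to the span of all previously selected atoms); if only the new weight $\alpha_{i-1}$ were optimized (the MP variant), one would only get $\langle \br_i, \bz_{i-1}\rangle_\cH = 0$. I would therefore state at the outset that throughout this proof $\bx_{r}$ denotes the iterate produced with the line-3 minimization over all $r$ weights, and then the one-line gradient computation closes the argument. This orthogonality will subsequently be used, together with the definition $\bz_i = \argmax_{\bz\in\cS}\langle \bz,\br_i\rangle_\cH$, to show the $\bz_i$ are linearly independent and to control the norm decrease $\|\bq_i\|_\cH$ of the residual across iterations.
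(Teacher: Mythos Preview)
Your proposal is correct and follows exactly the paper's approach: the paper's proof consists of the single line ``Follows from the first-order optimality condition for $\balpha$,'' which is precisely the gradient computation you spell out. Your additional remarks on the OMP-vs-MP distinction and on $V$ being finite-dimensional (hence closed) are accurate elaborations that the paper leaves implicit.
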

\begin{proof}
Follows from the first-order optimality condition for $\balpha$. 
\end{proof}

\begin{proposition}
$\langle \br_i, \bq_j \rangle = 0, \,\, \forall j < i$.
\end{proposition}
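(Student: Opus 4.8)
The plan is to reduce the claim $\langle \br_i, \bq_j\rangle = 0$ for $j < i$ to the two propositions immediately preceding it, by expanding $\bq_j$ in terms of the atoms already selected. Recall that $\bq_j := \br_j - \br_{j+1} = \big(\by - \sum_{k<j}\alpha_k^{(j)}\bz_k\big) - \big(\by - \sum_{k<j+1}\alpha_k^{(j+1)}\bz_k\big)$, where I write $\alpha^{(j)}$ for the weight vector obtained at iteration $j$ of the (fully corrective) minimization in line~3. Hence $\bq_j = \sum_{k \le j}\big(\alpha_k^{(j+1)} - \alpha_k^{(j)}\big)\bz_k$ (with $\alpha_j^{(j)} := 0$), so $\bq_j \in \lin\{\bz_1,\dots,\bz_j\}$, i.e.\ it is a linear combination of atoms with index at most $j$.

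First I would write $\bq_j = \sum_{k \le j} c_k \bz_k$ for suitable scalars $c_k$, as above. Then, for $j < i$, linearity of the inner product gives $\langle \br_i, \bq_j\rangle = \sum_{k \le j} c_k \langle \br_i, \bz_k\rangle$. Since $k \le j < i$, each term $\langle \br_i, \bz_k\rangle$ vanishes by the previous proposition ($\langle \br_i, \bz_k\rangle = 0$ for all $k < i$), and the sum is therefore zero. This is really the whole argument.

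The only subtlety — and the one step I would state carefully rather than wave through — is the claim $\bq_j \in \lin\{\bz_1,\dots,\bz_j\}$. This needs that the iterate $\bx_j = \sum_{k\le j}\alpha_k \bz_k$ is precisely the orthogonal projection of $\by$ onto $\lin\{\bz_1,\dots,\bz_j\}$ (which is what line~3 computes for $f_\cH = \tfrac12 d_\cH^2(\cdot,\by)$), so that both $\br_j = \by - \bx_j$ and $\br_{j+1} = \by - \bx_{j+1}$ differ from $\by$ by an element of $\lin\{\bz_1,\dots,\bz_{j+1}\}$, and in fact by an element of $\lin\{\bz_1,\dots,\bz_j\}$ once one observes that $\bx_{j+1} - \bx_j$ lies in this span — which is immediate from both being finite linear combinations of $\bz_1,\dots,\bz_{j+1}$ but actually we only need membership in $\lin\{\bz_1,\dots,\bz_j\} \cup \{\bz_{j+1}\}$ spanned set, and the orthogonality relation still applies to the $\bz_{j+1}$ term too since $j+1 \le i$ when $j < i$. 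So even without the cleaner $\lin\{\bz_1,\dots,\bz_j\}$ statement, writing $\bq_j = \sum_{k \le j+1} c_k \bz_k$ and invoking $\langle \br_i, \bz_k\rangle = 0$ for all $k \le j+1 \le i-1+1 = i$... one must be slightly careful: we need $k < i$, and $k \le j+1$; since $j < i$ we have $j + 1 \le i$, so $k = j+1$ could equal $i$, which is not covered. Thus the clean route is the better one: argue directly that the optional correction notwithstanding, $\bq_j$ lies in the span of $\{\bz_1,\dots,\bz_j\}$ only — equivalently, that $\br_{j+1} - \br_j$ has no $\bz_{j+1}$ component in excess of what is already in $\lin\{\bz_1,\dots,\bz_j\}$ — which actually is false in general, so the correct and honest statement is $\bq_j \in \lin\{\bz_1,\dots,\bz_{j+1}\}$, and then one notes $\langle \br_i, \bz_k \rangle = 0$ for $k \le j+1$ because $j + 1 \le i$ and the preceding proposition, read with its index bound, gives orthogonality to $\bz_k$ for all $k$ with $k < i$; the boundary case $k = j+1 = i$ cannot occur since $j < i$ forces $j+1 \le i$ with equality possible, so I would instead invoke $\langle \br_i, \bz_i \rangle = \langle \br_i, \bz_i\rangle$ and handle it via Proposition~(the $\langle \br_i, \bz_j\rangle = 0$ for $j<i$ one) applied at index $i$ — hence the cleanest fix is simply: note $\br_i \perp \bz_k$ for all $k \le i-1$, and $\bq_j \in \lin\{\bz_1,\dots,\bz_j\}$ because $\bx_j$ and $\bx_{j+1}$ are both orthogonal projections and $\bx_{j+1} = \bx_j + (\text{component along }\bz_{j+1}\text{ orthogonal to previous span})$, so $\br_{j+1} = \br_j - (\text{that component})$, meaning $\bq_j$ is exactly that new component, which lies in $\lin\{\bz_1,\dots,\bz_{j+1}\}$ but is orthogonal to $\lin\{\bz_1,\dots,\bz_j\}$; combining with $j + 1 \le i$ and $\langle \br_i,\bz_{j+1}\rangle = 0$ when $j+1 < i$ finishes all cases except $j+1 = i$, which I dispatch by noting that in that case $\bq_j = \bq_{i-1} = \br_{i-1} - \br_i$ and $\langle \br_i, \br_{i-1}\rangle = \|\br_i\|^2 + \langle \br_i, \br_{i-1}-\br_i\rangle$, so $\langle \br_i, \bq_{i-1}\rangle = \langle \br_{i-1},\br_i\rangle - \|\br_i\|^2$, and since $\br_{i-1} - \br_i \in \lin\{\bz_i\} \oplus \lin\{\bz_1,\dots,\bz_{i-1}\}$ with the $\bz_1,\dots,\bz_{i-1}$ part orthogonal to $\br_i$, we get $\langle \br_i, \bq_{i-1}\rangle = \langle \br_i, (\text{scalar})\bz_i\rangle$; but $\br_i \perp \bz_i$ by the fully-corrective optimality at iteration $i$ (line~3 of iteration $i$), giving $0$. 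So the main obstacle is not any hard inequality but bookkeeping: being precise about which span $\bq_j$ lies in and making sure the orthogonality relations $\langle\br_i,\bz_k\rangle=0$ cover exactly the indices $k$ that appear in the expansion of $\bq_j$; once the indices line up, the result is a one-line consequence of the two preceding propositions.

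\begin{proof}
Write $\bx_j := \sum_{k \le j}\alpha_k^{(j)}\bz_k$ for the iterate after the correction step in iteration $j$, so $\br_j = \by - \bx_{j-1}$ and $\bq_j = \br_j - \br_{j+1} = \bx_j - \bx_{j-1}$. Consequently $\bq_j \in \lin\{\bz_1,\dots,\bz_j\}$: writing $\alpha_j^{(j-1)} := 0$, we have $\bq_j = \sum_{k \le j}\big(\alpha_k^{(j)} - \alpha_k^{(j-1)}\big)\bz_k$. Fix $j < i$. By linearity,
\begin{equation*}
\langle \br_i, \bq_j\rangle_\cH = \sum_{k \le j}\big(\alpha_k^{(j)} - \alpha_k^{(j-1)}\big)\,\langle \br_i, \bz_k\rangle_\cH .
\end{equation*}
For every index $k \le j$ we have $k < i$, so $\langle \br_i, \bz_k\rangle_\cH = 0$ by the preceding proposition (first-order optimality of $\balpha$ at iteration $i$). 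Hence every summand vanishes and $\langle \br_i, \bq_j\rangle_\cH = 0$.
\end{proof}
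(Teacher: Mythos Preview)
Your formal proof is correct and follows exactly the paper's approach: the paper's entire argument is the single line ``By definition $\bq_j \in \lin(\{\bz_1,\ldots,\bz_j\})$,'' and you spell this out by writing $\bq_j = \bx_j - \bx_{j-1}$ and then invoking the preceding orthogonality $\langle \br_i,\bz_k\rangle = 0$ for $k<i$. The long exploratory discussion before your \texttt{proof} environment is unnecessary and at times confused about indexing (you initially write $\br_j = \by - \sum_{k<j}\alpha_k^{(j)}\bz_k$ with the wrong superscript, then worry about a phantom $\bz_{j+1}$ component), but you sort it out and the final written proof is clean and matches the paper.
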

\begin{proof}
By definition $\bq_j \in \lin(\{\bz_1, \ldots, \bz_j\})$. 
\end{proof}

\begin{proposition}
\label{thm:lowerBoundLHilbert}
\begin{equation}
\| \bq_i \|_\cH^2 \geq \frac{| \langle \bz_i, \br_i \rangle_\cH |^2}{\| \bz_i \|_\cH^2} \nonumber
\end{equation}
\end{proposition}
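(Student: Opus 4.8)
The goal is to lower bound $\|\bq_i\|_\cH^2$ where $\bq_i = \br_i - \br_{i+1}$, in terms of the inner product $\langle \bz_i, \br_i\rangle_\cH$. The plan is to exploit that $\br_{i+1}$ is the residual after the fully-corrective step, hence $\br_{i+1} \perp \lin(\{\bz_1,\ldots,\bz_i\})$, and in particular $\br_{i+1} \perp \bz_i$, by Proposition stating $\langle \br_{i+1}, \bz_j\rangle = 0$ for $j < i+1$.

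First I would write $\langle \bq_i, \bz_i\rangle_\cH = \langle \br_i - \br_{i+1}, \bz_i\rangle_\cH = \langle \br_i, \bz_i\rangle_\cH - \langle \br_{i+1}, \bz_i\rangle_\cH = \langle \br_i, \bz_i\rangle_\cH$, using orthogonality of $\br_{i+1}$ to $\bz_i$. Then by Cauchy–Schwarz, $|\langle \br_i, \bz_i\rangle_\cH| = |\langle \bq_i, \bz_i\rangle_\cH| \leq \|\bq_i\|_\cH \|\bz_i\|_\cH$. Squaring and rearranging gives exactly
\begin{equation*}
\|\bq_i\|_\cH^2 \geq \frac{|\langle \bz_i, \br_i\rangle_\cH|^2}{\|\bz_i\|_\cH^2}. \nonumber
\end{equation*}
One should note $\|\bz_i\|_\cH \neq 0$: if $\bz_i = 0$ then $\langle \bz, \br_i\rangle_\cH$ is minimized (over the symmetric or general bounded set, in the relevant sign convention) only if the max inner product is $0$, meaning $\br_i = \br^\star$ by the first Proposition, in which case the algorithm has converged and the bound holds trivially (or $\bq_i = 0$ too). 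So the inequality is vacuous/trivial in that degenerate case and genuine otherwise.

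The only real subtlety — and the step I would be most careful about — is making sure the orthogonality $\langle \br_{i+1}, \bz_i\rangle_\cH = 0$ is applied with the correct index bookkeeping: $\bz_i$ is among the atoms $\{\bz_1,\ldots,\bz_i\}$ whose span $\br_{i+1}$ is orthogonal to after the correction in line 3, so this is precisely the earlier Proposition with $i$ replaced by $i+1$ and $j = i$. Everything else is a one-line Cauchy–Schwarz argument; there is no genuine obstacle here, it is a short auxiliary lemma feeding into the main linear-convergence proof of Theorem~\ref{thm:convExactHilbert}.
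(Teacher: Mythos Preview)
Your proof is correct and is more elementary than the paper's. The paper proceeds via an explicit Gram--Schmidt computation: it writes $\br_{i+1}$ in terms of $\br_i$ and the component of $\bz_i$ orthogonal to $\lin(\{\bz_1,\ldots,\bz_{i-1}\})$, obtaining the \emph{exact} identity
\[
\|\bq_i\|_\cH^2 \;=\; \frac{|\langle \bz_i,\br_i\rangle_\cH|^2}{\|(\sfI-\sfP_{i-1})\bz_i\|_\cH^2},
\]
and then bounds the denominator by $\|\bz_i\|_\cH^2$ using $\|\sfI-\sfP_{i-1}\|_{\mathrm{op}}=1$. Your argument sidesteps this machinery entirely: you use only the orthogonality $\langle \br_{i+1},\bz_i\rangle_\cH=0$ (the same proposition the paper states separately) to get $\langle \bq_i,\bz_i\rangle_\cH=\langle \br_i,\bz_i\rangle_\cH$, and then Cauchy--Schwarz finishes in one line. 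What the paper's route buys is a sharper equality that makes transparent where the slack comes from (the angle between $\bz_i$ and the span of the earlier atoms); what your route buys is brevity and the observation that nothing beyond the first-order optimality of the $\balpha$-update is actually needed for this lower bound.
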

\begin{proof}
Let $\sfP_i$ be the orthogonal projection operator to $\lin(\{\bz_1, \dots , \bz_i\})$, i.e., for $\bx \in \cH$ we have $\sfP_i \bx = \arg \min_{\bz \in \lin(\{\bz_1, \dots , \bz_i\})} \| \bz - \bx \|_\cH^2$. Hence, $\bx_i =  \sfP_i \by$ and $\br_{i} = (\sfI - \sfP_{i-1}) \by$, where $\sfI$ designates the identity operator on~$\cH$. By the Gram-Schmidt process we get
\begin{eqnarray*}
\br_{i+1} &=& \underbrace{(\sfI - \sfP_{i-1}) \by}_{= \br_{i}}  \\
&+&\left\langle \frac{(\sfI - \sfP_{i-1}) \bz_i}{\| (\sfI - \sfP_{i-1}) \bz_i \|_\cH} , \by \right\rangle_\cH \frac{(\sfI - \sfP_{i-1}) \bz_i}{\| (\sfI - \sfP_{i-1}) \bz_i \|_\cH} \nonumber
\end{eqnarray*}
which in turn implies
\begin{align*}
\| \bq_i \|_\cH^2 &= \left\| \left\langle \frac{(\sfI - \sfP_{i-1}) \bz_i}{\| (\sfI - \sfP_{i-1}) \bz_i \|_\cH} , \by \right\rangle_\cH \frac{(\sfI - \sfP_{i-1}) \bz_i}{\| (\sfI - \sfP_{i-1}) \bz_i \|_\cH} \right\|_\cH^2 \nonumber \\
& = \left| \left\langle \frac{(\sfI - \sfP_{i-1}) \bz_i}{\| (\sfI - \sfP_{i-1}) \bz_i \|_\cH} , \by \right\rangle_\cH \right|^2 \nonumber \\
&=  \frac{ | \langle \bz_i , (\sfI - \sfP_{i-1}) \by \rangle_\cH |^2}{\| (\sfI - \sfP_{i-1}) \bz_i \|_\cH^2} \\\nonumber
&= \frac{ | \langle \bz_i ,\br_i \rangle_\cH |^2}{\| (\sfI - \sfP_{i-1}) \bz_i \|_\cH^2} \\ \nonumber
&\geq \frac{ | \langle \bz_i ,\br_i \rangle_\cH |^2}{\| \bz_i \|_\cH^2}, \nonumber
\end{align*}
where we used that $(\sfI - \sfP_{i-1})$ is a self-adjoint operator to obtain the third equality, and $\|(\sfI - \sfP_{i-1})\|_\mathrm{op} = 1$ to get the inequality.
\end{proof}

We are now ready to prove Theorem \ref{algo:GP}.

\begin{theorem}[Theorem \ref{algo:GP}]
\label{thm:convExactHilbertagain}
Algorithm~\ref{algo:GP} converges linearly for $f_\cH(\bx) := \frac{1}{2}d^2_\cH (\bx, \by)$, $\by \in \cH$.
\end{theorem}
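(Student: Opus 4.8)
The plan is to track the squared distance $e_r := \|\br_r - \br^\star\|_\cH^2$ between the current residual and the optimal residual $\br^\star = \by - \bx^\star$, and to show a contraction $e_{r+1} \le (1-c)\,e_r$ for a fixed $c\in(0,1]$ depending only on $\cS$. Since $\br^\star$ is orthogonal to $\overline{\lin(\cS)}$ (first-order optimality of $\bx^\star$) and $\br_{r+1}-\br^\star\in\overline{\lin(\cS)}$, the Pythagorean identity gives $\|\br_{r+1}\|_\cH^2 = e_{r+1} + \|\br^\star\|_\cH^2$, so $f_\cH(\bx_r)-f_\cH(\bx^\star)=\tfrac12 e_{r+1}$ and $\|\bx_r-\bx^\star\|_\cH=\sqrt{e_{r+1}}$; a geometric bound on $e_r$ therefore yields linear convergence of the objective, the iterates, and the residuals simultaneously.

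The first step is an exact one-step recursion $e_{r+1} = e_r - \|\bq_r\|_\cH^2$, where $\bq_r := \br_r - \br_{r+1}$. Expanding $\|(\br_r-\br^\star)-\bq_r\|_\cH^2$, the cross term is $2\langle \br_r-\br^\star,\bq_r\rangle_\cH$; since $\bq_r\in\lin(\{\bz_1,\dots,\bz_r\})\subseteq\lin(\cS)$ and $\br^\star\perp\lin(\cS)$, this reduces to $2\langle\br_r,\bq_r\rangle_\cH$, and writing $\br_r=\br_{r+1}+\bq_r$ together with $\langle\br_{r+1},\bz_j\rangle_\cH=0$ for $j\le r$ (orthogonality of the residual to the selected atoms after the correction step) gives $\langle\br_r,\bq_r\rangle_\cH=\|\bq_r\|_\cH^2$. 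Hence the cross term is $2\|\bq_r\|_\cH^2$ and the recursion follows; this is exactly where the auxiliary propositions on the orthogonality relations are used.

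The second step lower-bounds $\|\bq_r\|_\cH^2$. Proposition~\ref{thm:lowerBoundLHilbert} gives $\|\bq_r\|_\cH^2 \ge |\langle\bz_r,\br_r\rangle_\cH|^2/\|\bz_r\|_\cH^2 \ge |\langle\bz_r,\br_r\rangle_\cH|^2/D^2$, with $D:=\sup_{\bz\in\cS}\|\bz\|_\cH<\infty$ by boundedness of $\cS$. Because $\langle\bz,\br^\star\rangle_\cH=0$ for every $\bz\in\cS$, we have $\langle\bz_r,\br_r\rangle_\cH=\langle\bz_r,\br_r-\br^\star\rangle_\cH$, and by definition of the oracle this equals (in absolute value) $\max_{\bz\in\cS}|\langle\bz,\br_r-\br^\star\rangle_\cH|$, the best greedy correlation with the error vector $\bw:=\br_r-\br^\star\in\overline{\lin(\cS)}$. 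It remains to produce a coherence-free bound $\max_{\bz\in\cS}|\langle\bz,\bw\rangle_\cH|\ge\rho\|\bw\|_\cH$ for all $\bw\in\overline{\lin(\cS)}$ with $\rho>0$: then $e_{r+1}\le(1-\rho^2/D^2)\,e_r$ with $\rho^2/D^2\in(0,1]$ (Cauchy–Schwarz), which is the claim. In the finite-dimensional case this bound is immediate — the map $\bw\mapsto\max_{\bz\in\cS}|\langle\bz,\bw\rangle_\cH|$ is continuous and strictly positive on the compact unit sphere of $\overline{\lin(\cS)}$ (it cannot vanish at $\bw\ne0$, else $\bw\perp\lin(\cS)$ and so $\bw\perp\bw$), hence attains a positive minimum $\rho$.

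\textbf{Main obstacle.} The delicate point is precisely this last step: converting progress as measured by the linear oracle into progress in the Hilbert metric, i.e. extracting a strictly positive, iteration-independent $\rho$ from $\max_{\bz\in\cS}|\langle\bz,\cdot\rangle_\cH|$. This is where finite-dimensionality (compactness of the sphere of $\overline{\lin(\cS)}$) is used, and where, in the coherence-based refinements, the hypothesis $\mu(n-1)<1$ enters to make $\rho$ explicit; in a genuinely infinite-dimensional span one needs an additional uniform ``frame-type'' lower bound for $\rho>0$ to hold. I would also take $\cS$ symmetric without loss of generality, matching the $\argmax_{\bz\in\cS}|\langle\bz,\br\rangle_\cH|$ form of the MP/OMP oracle, so that the oracle value coincides with $\max_{\bz\in\cS}|\langle\bz,\cdot\rangle_\cH|$. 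The remaining bookkeeping — monotone decrease of $e_r$, reduction to the finite-dimensional subspace actually explored, and the passage between residual, objective, and iterate error — is routine.
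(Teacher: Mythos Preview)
Your argument follows the paper's proof almost step for step: the same one-step identity $\|\br_{r+1}-\br^\star\|_\cH^2=\|\br_r-\br^\star\|_\cH^2-\|\bq_r\|_\cH^2$ obtained from the orthogonality relations, the same lower bound $\|\bq_r\|_\cH^2\ge|\langle\bz_r,\br_r\rangle_\cH|^2/\|\bz_r\|_\cH^2$ via Proposition~\ref{thm:lowerBoundLHilbert}, and the same replacement $\langle\bz_r,\br_r\rangle_\cH=\langle\bz_r,\br_r-\br^\star\rangle_\cH$ using $\br^\star\perp\cS$.

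The one substantive difference is the last step. The paper simply writes the contraction factor as $\mu=1-\langle\bz_i,\br_i-\br^\star\rangle_\cH^2/(\|\bz_i\|_\cH^2\|\br_i-\br^\star\|_\cH^2)$ and records $\mu\in[0,1)$ ``by Cauchy--Schwarz,'' leaving the factor iteration-dependent and not arguing that it is bounded away from $1$ uniformly in $i$. You instead extract a fixed rate $1-\rho^2/D^2$ by the compactness argument on the unit sphere of $\overline{\lin(\cS)}$, and you correctly isolate this as the place where finite dimensionality (or a frame-type hypothesis) is genuinely needed. In that sense your write-up is more complete on exactly the point the paper elides; what the paper's shortcut buys is brevity, at the cost of leaving the uniform-rate claim implicit. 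Your remark about symmetrizing $\cS$ to align the \lmo with $\max_{\bz\in\cS}|\langle\bz,\cdot\rangle_\cH|$ is also a reasonable normalization that the paper uses tacitly.
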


\begin{proof}
\begin{align}
\| \br_{i+1} - \br^\star \|^2_\cH
&= \| \br_{i}  - \br^\star  - \bq_i \|^2_\cH \nonumber \\
&= \| \br_{i} - \br^\star \|^2_\cH + \| \bq_i\|^2_\cH - 2 \langle \br_{i}  - \br^\star, \bq_i \rangle_\cH \nonumber \\
&= \| \br_{i} - \br^\star \|^2_\cH + \| \bq_i\|^2_\cH \nonumber\\
& \qquad - 2 \langle \br_{i+1} + \bq_i - \br^\star, \bq_i \rangle_\cH \nonumber\\
&= \| \br_{i} - \br^\star \|^2_\cH + \| \bq_i\|^2_\cH  - 2 \| \bq_i\|^2_\cH \nonumber\\ 
&= \| \br_{i} - \br^\star \|^2_\cH -  \| \bq_i\|^2_\cH  \nonumber\\
& \leq   \| \br_{i} - \br^\star \|^2_\cH  -  \frac{\langle \bz_i, \br_i \rangle_\cH^2}{\|\bz_i\|_\cH^2}\,\, (\text{Prop}~\ref{thm:lowerBoundLHilbert}) \nonumber\\
&=   \| \br_{i} - \br^\star \|^2_\cH  -  \frac{\langle \bz_i, \br_i - \br^\star\rangle_\cH^2}{\|\bz_i\|_\cH^2} \label{eq:pfthmcoherece}\\
&= \left( 1 - \frac{\langle \bz_i, \br_i - \br^\star\rangle_\cH^2}{\|\bz_i\|_\cH^2 \| \br_{i} - \br^\star \|^2_\cH} \right)  \| \br_{i} - \br^\star \|^2_\cH \nonumber\\
& = \mu  \| \br_{i} - \br^\star \|^2_\cH
\end{align}
$\mu \in [0,1) $ by Cauchy-Schwarz.
\end{proof}

We finally note that proving convergence of Algorithm \ref{algo:GP} \emph{without} specifying a rate can be achieved as follows.

\begin{proposition}
The sequence of residuals $\{\br_i\}$ produced by Algorithm \ref{algo:GP} with $f_\cH(\bx) := \frac{1}{2}d^2_\cH (\bx, \by)$, $\by \in \cH$, converges to $\br^\star$.
\end{proposition}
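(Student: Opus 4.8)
The plan is to deduce convergence of the residual sequence from the monotone-decrease identity already established in the proof of Theorem~\ref{thm:convExactHilbert}, namely
\[
\| \br_{i+1} - \br^\star \|_\cH^2 = \| \br_{i} - \br^\star \|_\cH^2 - \| \bq_i \|_\cH^2 ,
\]
without appealing to the Cauchy--Schwarz step that produced the rate $\mu < 1$. From this identity the sequence $a_i := \| \br_i - \br^\star \|_\cH^2$ is non-increasing and bounded below by $0$, hence converges; consequently $\sum_i \| \bq_i \|_\cH^2 = a_1 - \lim_i a_i < \infty$, so $\| \bq_i \|_\cH \to 0$. The goal is then to upgrade ``$a_i$ converges and $\| \bq_i \| \to 0$'' to ``$\br_i \to \br^\star$''.

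First I would use Proposition~\ref{thm:lowerBoundLHilbert} together with $\| \bq_i \| \to 0$ and the boundedness of $\cS$ (so $\| \bz_i \|_\cH$ is uniformly bounded, say by $M$) to conclude
\[
\langle \bz_i, \br_i \rangle_\cH^2 \le M^2 \| \bq_i \|_\cH^2 \to 0 ,
\]
i.e. $\langle \bz_i, \br_i \rangle_\cH \to 0$. Since $\bz_i = \argmax_{\bz \in \cS} \langle \bz, \br_i \rangle_\cH$, this says $\max_{\bz \in \cS} \langle \bz, \br_i \rangle_\cH \to 0$. Second, I would combine this with the orthogonality relations of Propositions~2 and~3: writing $\br_i - \br^\star$, note that $\br^\star = \by - \bx^\star$ with $\bx^\star \in \lin(\cS)$, and by the first-order optimality for $\bx^\star$ we have $\langle \br^\star, \bz \rangle_\cH = 0$ for all $\bz \in \cS$ (Proposition~1's reasoning). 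Hence $\max_{\bz \in \cS} \langle \bz, \br_i - \br^\star \rangle_\cH = \max_{\bz \in \cS} \langle \bz, \br_i \rangle_\cH \to 0$, so $\br_i - \br^\star$ becomes asymptotically orthogonal to the whole dictionary $\cS$, and therefore to $\lin(\cS)$.

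The main obstacle is closing the final gap: asymptotic orthogonality to $\lin(\cS)$ does not by itself force $\br_i - \br^\star \to 0$ unless we locate $\br_i - \br^\star$ inside (the closure of) $\lin(\cS)$. Here I would observe that $\br_i - \br^\star = \bx^\star - \bx_{i-1} \in \lin(\cS)$, since both $\bx^\star$ and $\bx_{i-1}$ lie in $\lin(\cS)$. Then, using that $a_i = \| \br_i - \br^\star \|_\cH^2$ already converges to some limit $L \ge 0$, pick for each $i$ an element $\bz \in \cS$ with $\langle \bz, \br_i - \br^\star \rangle_\cH$ close to the supremum; combined with $\br_i - \br^\star$ being a finite linear combination of previously selected $\bz_j$'s and the near-orthogonality just derived, one forces $L = 0$. (For instance, if $L > 0$, a subsequence of the normalized residuals would have to have vanishing inner product with every atom it is itself built from, contradicting that it has unit norm in $\lin(\cS)$.) In a general Hilbert space some care is needed because $\lin(\cS)$ need not be closed, so I would work with the orthogonal projection $\sfP$ onto $\overline{\lin(\cS)}$, note $\br^\star = (\sfI - \sfP)\by + (\text{something in }\overline{\lin(\cS)})$ is in fact the optimal approximation so $\br_i - \br^\star \in \overline{\lin(\cS)}$, and conclude $\| \br_i - \br^\star \|_\cH \to 0$ from $\langle \cdot, \bz \rangle_\cH \to 0$ uniformly over the (bounded, spanning) atom set combined with the convergence of the norms. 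I expect the write-up of this last density/closure argument, rather than any of the estimates, to be the delicate part.
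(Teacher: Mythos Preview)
Your decrement identity and the appeal to Proposition~\ref{thm:lowerBoundLHilbert} match the paper's argument; the only cosmetic difference is that the paper works with $\|\br_i\|_\cH^2$ rather than $\|\br_i-\br^\star\|_\cH^2$. It derives $\|\br_{i+1}\|_\cH^2=\|\br_i\|_\cH^2-\|\bq_i\|_\cH^2$ directly, bounds $\|\bq_i\|_\cH^2$ below via Proposition~\ref{thm:lowerBoundLHilbert}, and then simply concludes that since $\langle \bz_i,\br_i\rangle_\cH=0$ iff $\br_i=\br^\star$, the monotone sequence $\|\br_i\|_\cH$ decreases to its lower bound $\|\br^\star\|_\cH$. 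Because $\br_i-\br^\star\in\lin(\cS)$ and $\br^\star\perp\lin(\cS)$, one has $\|\br_i\|_\cH^2=\|\br_i-\br^\star\|_\cH^2+\|\br^\star\|_\cH^2$, so the two quantities track each other and the arguments coincide up to this point.

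Where you go further is precisely the step the paper leaves as a one-liner: it does \emph{not} carry out the density/closure argument you sketch, it just asserts convergence to the lower bound from ``strictly decreasing unless at $\br^\star$''. Your instinct that this implication needs justification in infinite dimensions is sound, but the route you propose does not close it either. From $\sup_{\bz\in\cS}\langle \bz,\br_i-\br^\star\rangle_\cH\to 0$ you only obtain weak convergence $\br_i-\br^\star\rightharpoonup 0$ in $\overline{\lin(\cS)}$, and weak convergence to $0$ together with $\|\br_i-\br^\star\|_\cH\to L$ does not force $L=0$. Your contradiction sketch (``a unit vector in $\lin(\cS)$ cannot have vanishing inner product with every atom it is built from'') fails because the $\ell_1$-norm of the coefficient vector in the atom expansion is uncontrolled. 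A quantitative inequality of the form $\sup_{\bz\in\cS}|\langle \bz,\bw\rangle_\cH|\ge c\,\|\bw\|_\cH$ for $\bw\in\lin(\cS)$ is exactly what the coherence assumption in Theorem~\ref{thm:coherence} provides; absent such a condition (or finite dimensionality, where the algorithm terminates after at most $\dim\lin(\cS)$ steps), neither the paper's assertion nor your sketch fully establishes the infinite-dimensional claim.
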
 
\begin{proof}
We have 
\begin{align*}
\| \br_{i+1}\|_\cH^2 &= \| \br_i - \bq_i \|_\cH^2 \\
&= \| \br_i \|_\cH^2  + \| \bq_i \|_\cH^2 - 2 \langle \br_i, \bq_i \rangle_\cH \\
&=  \| \br_i \|_\cH^2 +\| \bq_i \|_\cH^2 - 2 \langle \br_{i+1} + \bq_i, \bq_i \rangle_\cH \\
&=  \| \br_i \|_\cH^2 +  \| \bq_i \|_\cH^2 - 2 \langle \bq_i, \bq_i \rangle_\cH \\
&=  \| \br_i \|_\cH^2 -  \| \bq_i \|_\cH^2 
\end{align*}
From Prop~\ref{thm:lowerBoundLHilbert},
$\| \br_{i+1}\|_\cH^2 \leq \| \br_i \|_\cH^2 -   \frac{\langle \bz_i, \br_i\rangle_\cH^2 }{ \| \bz_i \|_\cH^2} $. Since $\langle \bz_i, \br_i \rangle_\cH^2 = 0 $ iff $\br_i = \br^\star$  the sequence $\{\| \br_i\|_\cH\}$ monotonically decreases with increasing $i$ until it converges to the lower bound $\|\br^\star\|_\cH$. 
\end{proof}

\subsection{Proof of Theorem~\ref{thm:coherence}}

Our proofs rely on the Gram matrix $\bG(\cJ)$ of the atoms in~$\cS$ indexed by $\cJ \subseteq [n]$, i.e., $(\bG(\cJ))_{i,j} := \langle \bs_i, \bs_j \rangle_\cH, i,j \in \cJ$.

To prove Theorem~\ref{thm:coherence}, we use the following known results. 

\begin{lemma}[\citet{Tropp:2004gc}] The smallest eigenvalue $\lambda_{\min} (\bG(\cJ))$ of $\bG(\cJ)$ obeys $\lambda_{\min}(\bG(\cJ)) > 1 - \mu(m-1)$, where $m = |\cJ|$.
\label{thm:eigenval}
\end{lemma}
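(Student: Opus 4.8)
The plan is to obtain the bound by a direct application of Gershgorin's circle theorem to the Gram matrix $\bG(\cJ)$, viewed as a small perturbation of the identity.

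First I would use the normalization hypothesis $\|\bs_i\|_\cH = 1$ for all $i$ to write $\bG(\cJ) = \bI_m + \bE$, where $m = |\cJ|$, $\bI_m$ is the $m \times m$ identity, and $\bE$ is the real symmetric matrix with off-diagonal entries $\bE_{ij} = \langle \bs_i, \bs_j \rangle_\cH$ for $i \ne j$ and zero diagonal. Since $\bG(\cJ)$ is real symmetric (indeed positive semidefinite, being a Gram matrix), all its eigenvalues are real, and $\lambda_{\min}(\bG(\cJ)) = 1 + \lambda_{\min}(\bE)$, so it suffices to lower bound $\lambda_{\min}(\bE)$.

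Next I would apply Gershgorin's theorem: every eigenvalue $\lambda$ of $\bG(\cJ)$ lies in the union $\bigcup_{i \in \cJ} \{\, z \in \bbC : |z-1| \le R_i \,\}$ of discs, where $R_i := \sum_{j \in \cJ \setminus \{i\}} |\langle \bs_i, \bs_j \rangle_\cH|$. The crux is to bound each radius $R_i$ by the cumulative coherence: for fixed $i \in \cJ$, the index set $\cI := \cJ \setminus \{i\}$ has $|\cI| = m-1$ and $i \in [n] \setminus \cI$, so by the very definition of $\mu$ we get $R_i = \sum_{i' \in \cI} |\langle \bs_i, \bs_{i'} \rangle_\cH| \le \mu(m-1)$. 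Hence every eigenvalue of $\bG(\cJ)$, and in particular $\lambda_{\min}(\bG(\cJ))$, satisfies $\lambda \ge 1 - \max_{i \in \cJ} R_i \ge 1 - \mu(m-1)$.

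The only genuinely non-routine point is this Gershgorin radius estimate, which hinges on recognizing that each row of $\bE$, read over $\cJ$, is exactly one of the sums appearing in the maximum defining $\mu(m-1)$; everything else is bookkeeping. I would additionally remark that in every place the lemma is invoked one has $\mu(m-1) < 1$, so the estimate delivers $\lambda_{\min}(\bG(\cJ)) > 0$, i.e. positive definiteness (hence invertibility) of $\bG(\cJ)$, which is the property actually used downstream; the strict form of the inequality is read off from the same estimate under that hypothesis, following Tropp.
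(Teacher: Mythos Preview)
Your argument is correct and is exactly the standard Gershgorin approach that Tropp uses in the cited reference; the paper itself does not give a proof of this lemma but simply invokes \citet{Tropp:2004gc}. One minor point worth flagging: Gershgorin only yields the non-strict inequality $\lambda_{\min}(\bG(\cJ)) \ge 1 - \mu(m-1)$ (consider $m$ orthonormal atoms, where $\mu(m-1)=0$ and $\lambda_{\min}=1$), and indeed the paper actually \emph{uses} the lemma with $\ge$ in the proof of Theorem~\ref{thm:coherence}; the strict ``$>$'' in the lemma statement appears to be a typographical slip, and your closing remark handles this appropriately.
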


\begin{lemma}[\citet{devore08}] For every index set $\cJ \subseteq [n]$ and every linear combination $\bp$ of the atoms in $\cS$ indexed by $\cJ$, i.e., $\bp := \sum_{j \in \cJ} v_j \bs_j$, we have $\max_{j \in \cJ } | \langle \bp, \bs_j  \rangle_\cH | \ge \frac{\| \bp \|_\cH^2}{\|\bv\|_1} = \frac{\langle \bv, \bG(\cJ) \bv \rangle_2 }{\| \bv\|_1}$, where $\bv \neq \bf 0$ is the vector having the $v_j$ as entries.
\label{thm:lowerbounddotprod}
\end{lemma}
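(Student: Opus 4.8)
The plan is a short direct computation: both halves of the claim follow from unwinding definitions plus one application of Hölder's inequality, so no auxiliary machinery is needed.

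First I would establish the stated identity $\|\bp\|_\cH^2 = \langle\bv,\bG(\cJ)\bv\rangle_2$ by expanding $\|\bp\|_\cH^2=\langle\bp,\bp\rangle_\cH$ bilinearly, substituting $\bp=\sum_{j\in\cJ}v_j\bs_j$ into both arguments: $\|\bp\|_\cH^2 = \sum_{j,k\in\cJ}v_jv_k\langle\bs_j,\bs_k\rangle_\cH = \sum_{j,k\in\cJ}v_jv_k\,(\bG(\cJ))_{j,k} = \langle\bv,\bG(\cJ)\bv\rangle_2$, which is literally the definition of $\bG(\cJ)$. This step uses nothing about the atoms, in particular not normalization.

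Next I would prove the inequality by expanding $\|\bp\|_\cH^2$ in only one argument: $\|\bp\|_\cH^2 = \langle\bp,\bp\rangle_\cH = \sum_{j\in\cJ}v_j\langle\bp,\bs_j\rangle_\cH$, and then estimating this sum as $\sum_{j\in\cJ}v_j\langle\bp,\bs_j\rangle_\cH \le \sum_{j\in\cJ}|v_j|\,|\langle\bp,\bs_j\rangle_\cH| \le \big(\sum_{j\in\cJ}|v_j|\big)\max_{j\in\cJ}|\langle\bp,\bs_j\rangle_\cH| = \|\bv\|_1\max_{j\in\cJ}|\langle\bp,\bs_j\rangle_\cH|$, i.e.\ a triangle inequality followed by the $\ell_1$--$\ell_\infty$ Hölder bound. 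Since $\bv\neq\mathbf{0}$ forces $\|\bv\|_1>0$, dividing through gives $\max_{j\in\cJ}|\langle\bp,\bs_j\rangle_\cH|\ge\|\bp\|_\cH^2/\|\bv\|_1$, which together with the identity above is the assertion.

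I do not anticipate any obstacle. The only point needing a word of care is that the denominator $\|\bv\|_1$ must be nonzero, which is precisely why the hypothesis excludes $\bv=\mathbf{0}$ (that case being vacuous, $\bp=0$). I would simply write out the two displayed equalities and the one chain of inequalities inline, without introducing any further intermediate lemmas.
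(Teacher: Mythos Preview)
Your proof is correct and is exactly the standard direct argument for this inequality. Note that the paper does not supply its own proof of this lemma: it is quoted as a known result from \citet{devore08} and used as a black box in the proof of Theorem~\ref{thm:coherence}, so there is nothing to compare against beyond confirming that your two-line computation (bilinear expansion for the identity, then the $\ell_1$--$\ell_\infty$ H\"older bound for the inequality) is sound --- which it is.
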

 
\begin{proof}[Proof of Theorem~\ref{thm:coherence}]
Note that $\bm_i := \br_i - \br^\star =  (\by - \bx^{(i-1)}) - (\by - \bx^\star)  = \bx^\star - \bx^{(i-1)} \in \text{lin}(\cS), \, \forall \, i$, by assumption, which implies that there exists a vector $\bv_i \neq \bf 0$ s.t. $\bm_i =  \sum_{j \in [n]} (\bv_i)_j \bs_j$. 
Setting $\cJ = [n]$, we have

\begin{eqnarray*}
| \langle \bz_i, \br_i - \br^\star \rangle_\cH | 
 &\overset{\text{by def. of $\bz_i$}}{
=} &\bigg| \max_{j \in [n]} (\langle \bs_j, \br_i\rangle_\cH \! - \! \underbrace{\langle \bs_j, \br^\star \rangle_\cH}_{= 0}) \bigg| \\
& \overset{\text{\hspace{-0.3cm}symmetry of $\cS$\hspace{-0.3cm}}}{
=} &\max_{j \in [n]} | \langle \bs_j, \br_i\rangle_\cH \! - \! \langle \bs_j, \br^\star \rangle_\cH | \\
&=&\max_{j \in [n]} | \langle \bs_j,\bm_i \rangle_\cH | \\
&\overset{\text{Lemma~\ref{thm:lowerbounddotprod}}}{
\geq}& \frac{\| \bm_i \|^2_\cH}{\| \bv_i \|_1}\\
&\geq& \frac{\| \bm_i \|^2_\cH}{ \sqrt{n} \| \bv_i \|_2} \\
&=&\frac{\left(\sqrt{\langle \bv_i, \bG(I) \bv_i \rangle_2}\right) \| \bm_i \|_\cH}{ \sqrt{n} \| \bv_i \|_2} \\
&\overset{\text{Lemma~\ref{thm:eigenval}}}{
\geq}& \sqrt{\frac{1 - \mu(n-1)}{n}} \|\br_i - \br^\star\|_\cH.
\end{eqnarray*}

Replacing the second term in \eqref{eq:pfthmcoherece} in the proof of Theorem~\ref{thm:convExactHilbertagain} with the last expression above, and noting that $\| \bz_i \|_\cH = 1$, we get the desired result.

To get the result in terms of $\mu(1)$, note that $1 - \frac{1 - \mu(n-1)}{n} \leq (1 -1/n)(1 + \mu(1))$.
\end{proof}

\subsection{Proof of Corollary~\ref{cor:coherenceomp}}
\begin{proof}
The proof is similar to that of Theorem~\ref{thm:coherence} with $\br^\star = \bf0$ (which implies that $\br_i $ lies in $\lin(\cS)$).
\end{proof}

\subsection{Inexact \lmo}\label{sec:inexactLMO}

Instead of solving the \lmo in each iteration exactly (which may be prohibitively expensive), it is often more realistic to obtain a $\delta_i \in (0,1]$ approximate solution to the \lmo at iteration~$i$. In other words, in each iteration our update is $\tilde{\bz}_i$ instead of $\bz_i$ so that the following holds (for simplicity all other notations including those of the residual vectors are overloaded) 
\begin{equation}
\label{eq:inexactLMO}
\langle \tilde{\bz}_i , \br_i \rangle_\cH \geq \delta_i  \langle {\bz}_i , \br_i \rangle_\cH.
\end{equation}

Note that the $\balpha $ update in each iteration is still exact. To make the effect of the inexact \lmo on the rate explicit, we assume that the atoms are normalized according to $\| \bs \|_\cH = c, \forall \bs \in \cS,$ for some constant $c > 0$. We emphasize that linear convergence guarantees can be obtained for the inexact \lmo even without this assumption.  
Proceeding as in the proof of Proposition~\ref{thm:lowerBoundLHilbert}, we get a slightly weaker lower bound for $\| \bq_i \|_\cH^2$, namely
\begin{equation*}
\| \bq_i \|_\cH^2 \geq \frac{\langle \tilde{\bz}_i , \br_i \rangle_\cH^2}{\| \tilde{\bz}_i \|^2_\cH} \geq \delta^2_i \frac{\langle  \bz_i , \br_i \rangle_\cH^2}{\|\bz_i\|_\cH^2},
\end{equation*}
where we used $\| \tilde{\bz}_i \|_\cH = \|\bz_i\|_\cH$.
We now obtain the following.
\begin{theorem}[Linear convergence with inexact \lmo] If the \lmo in Algorithm~\ref{algo:GP} is solved within accuracy $\delta_i$ as in~\eqref{eq:inexactLMO}, then Algorithm~\ref{algo:GP} converges with a linear rate.
\label{thm:convergenceInexact}
\end{theorem}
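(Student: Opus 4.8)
\textbf{Proof proposal for Theorem~\ref{thm:convergenceInexact}.}
The plan is to repeat the chain of (in)equalities in the proof of Theorem~\ref{thm:convExactHilbertagain}, but carry the extra multiplicative factor $\delta_i^2$ that appears in the weakened lower bound on $\|\bq_i\|_\cH^2$ all the way to the end. First I would note that Propositions giving $\langle \br_i,\bz_j\rangle_\cH=0$ and $\langle\br_i,\bq_j\rangle_\cH=0$ for $j<i$ remain valid verbatim, since the weight update $\balpha$ is still exact (so $\br_{i+1}$ is still orthogonal to $\lin(\{\tilde\bz_1,\dots,\tilde\bz_i\})$ and each $\bq_j=\br_j-\br_{j+1}$ still lies in that span). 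Consequently the telescoping identity
\begin{equation*}
\|\br_{i+1}-\br^\star\|_\cH^2 = \|\br_i-\br^\star\|_\cH^2 - \|\bq_i\|_\cH^2
\end{equation*}
goes through unchanged, exactly as in the exact case.

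Next I would substitute the inexact bound $\|\bq_i\|_\cH^2 \geq \delta_i^2 \,\langle\bz_i,\br_i\rangle_\cH^2/\|\bz_i\|_\cH^2$ derived just above the theorem statement, and then use $\langle\bz_i,\br_i\rangle_\cH=\langle\bz_i,\br_i-\br^\star\rangle_\cH$ (Proposition asserting $\max_{\bz\in\cS}\langle\br^\star,\bz\rangle_\cH=0$, hence $\langle\bz_i,\br^\star\rangle_\cH \le 0$; here one has to be a little careful — in the exact proof equality is used, so I would either invoke the same first-order optimality argument or simply bound $\langle\bz_i,\br_i\rangle_\cH \ge \langle\bz_i,\br_i-\br^\star\rangle_\cH$ and keep the inequality direction). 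This yields
\begin{equation*}
\|\br_{i+1}-\br^\star\|_\cH^2 \leq \left(1 - \delta_i^2\,\frac{\langle\bz_i,\br_i-\br^\star\rangle_\cH^2}{\|\bz_i\|_\cH^2\,\|\br_i-\br^\star\|_\cH^2}\right)\|\br_i-\br^\star\|_\cH^2.
\end{equation*}
The bracketed factor is of the form $1-\delta_i^2\rho_i$ with $\rho_i\in[0,1]$ by Cauchy--Schwarz, so it lies in $[0,1)$ whenever $\delta_i>0$ and $\br_i\neq\br^\star$; with $\delta_i\ge\delta>0$ uniformly (or even just $\inf_i\delta_i>0$) this gives a uniform contraction factor $\mu_\delta<1$, hence linear convergence. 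The normalization $\|\bs\|_\cH=c$ is only used to guarantee $\|\tilde\bz_i\|_\cH=\|\bz_i\|_\cH$ so the ratio is clean; as the authors remark, one can avoid it at the cost of a messier constant.

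The main obstacle — really the only subtle point — is justifying that the Cauchy--Schwarz step still produces a constant \emph{bounded away from} $1$, i.e., that $\rho_i$ does not degenerate to $0$ as $i\to\infty$. In the exact proof this is implicitly handled because $\br_i-\br^\star\in\lin(\cS)$ forces $\langle\bz_i,\br_i-\br^\star\rangle_\cH$ to be comparable to $\|\br_i-\br^\star\|_\cH$ (this is exactly the content extracted more quantitatively in Theorem~\ref{thm:coherence}); I would make the same observation here, namely that since $\br_i-\br^\star$ is a nonzero element of $\lin(\cS)$, its largest inner product with an atom of $\cS$ is strictly positive, and in the finite-dimensional / coherence setting it is bounded below by $\sqrt{(1-\mu(n-1))/n}\,\|\br_i-\br^\star\|_\cH$ as in the proof of Theorem~\ref{thm:coherence}. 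Combining that with the factor $\delta_i^2\ge\delta^2$ gives the explicit rate $\mu_\delta = 1-\delta^2(1-\mu(n-1))/n$ in that regime, and in the general Hilbert-space case one argues as in Theorem~\ref{thm:convExactHilbert} that $\rho_i$ stays positive along the (convergent) sequence of residuals, so linear convergence persists.
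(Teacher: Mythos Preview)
Your proposal is correct and follows essentially the same route as the paper's proof: establish the telescoping identity $\|\br_{i+1}-\br^\star\|_\cH^2=\|\br_i-\br^\star\|_\cH^2-\|\bq_i\|_\cH^2$, substitute the weakened $\delta_i^2$-bound on $\|\bq_i\|_\cH^2$, replace $\langle\bz_i,\br_i\rangle_\cH$ by $\langle\bz_i,\br_i-\br^\star\rangle_\cH$ (indeed with equality, since $\br^\star\perp\lin(\cS)$), and factor. You are in fact more explicit than the paper on two points it glosses over---why the orthogonality propositions survive under the inexact \lmo (because the $\balpha$-update is still exact) and whether the contraction factor stays uniformly below $1$; the paper's proof simply ends with ``from which the result follows.''
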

\begin{proof}
We proceed as in the proof of Theorem~\ref{thm:convExactHilbert} to get
\begin{align*}
\| \br_{i+1} - \br^\star \|_\cH^2 \!
 &\leq  \| \br_i - \br^\star\| _\cH^2 - \delta_i^2\langle {\bz}_i, \br_i \rangle_\cH^2\\
 &=   \| \br_{i} - \br^\star \|^2_\cH  -  \delta_i^2 \frac{\langle \bz_i, \br_i - \br^\star\rangle_\cH^2}{\|\bz_i\|_\cH^2} \\
 &= \!  \left(1  -  \delta_i^2\frac{\langle \bz_i, \br_i - \br^\star\rangle_\cH^2}{\|\bz_i\|_\cH^2 \| \br_{i} - \br^\star \|^2_\cH}\right) \| \br_{i} - \br^\star \|^2_\cH,
\end{align*}
 from which the result follows.
\end{proof}

\end{document}